\def\eqref#1{equation~(\ref{#1})}
\def\1{\bf{1}}
\newcommand{\Norm}[1]{\left\| #1 \right\|}
\newcommand{\norm}[1]{\left\| #1 \right\|_2}
\def\eps{{\varepsilon}}
\def\vzero{{\bf{0}}}
\def\va{{\bf{a}}}
\def\vu{{\bf{u}}}
\def\vv{{\bf{v}}}
\def\vx{{\bf{x}}}
\def\vy{{\bf{y}}}
\def\fN{{\mathcal{N}}}
\def\fO{{\mathcal{O}}}
\def\fQ{{\mathcal{Q}}}
\def\fY{{\mathcal{Y}}}
\def\fZ{{\mathcal{Z}}}
\def\BP{{\mathbb{P}}}
\def\BR{{\mathbb{R}}}
\newcommand{\E}{\mathbb{E}}
\newcommand{\R}{\mathbb{R}}
\DeclareMathOperator*{\argmax}{arg\,max}
\DeclareMathOperator{\diag}{diag}
\theoremstyle{plain}
\newtheorem{thm}{Theorem}
\newtheorem{lem}{Lemma}
\def\Ddots{\mathinner{\mkern1mu\raise\p@
\vbox{\kern7\p@\hbox{.}}\mkern2mu
\raise4\p@\hbox{.}\mkern2mu\raise7\p@\hbox{.}\mkern1mu}}
\newcommand*{\rom}[1]{\expandafter\@slowromancap\romannumeral #1@}
\def\SPM {{\rm{SubspacePowerMethod}}}
\def\nnz {{\rm nnz}}
\def\tr {{\rm tr}}
\def\ut {{(t)}}
\def\ui {{(i)}}
\def\uj {{(j)}}
\def\utm {{(t-1)}}
\def\A {{\bf A}}
\def\B {{\bf B}}
\def\C {{\bf C}}
\def\D {{\bf D}}
\def\E {{\bf E}}
\def\G {{\bf G}}
\def\I {{\bf I}}
\def\K {{\bf K}}
\def\M {{\bf M}}
\def\N {{\bf N}}
\def\P {{\bf P}}
\def\Q {{\bf Q}}
\def\R {{\bf R}}
\def\U {{\bf U}}
\def\V {{\bf V}}
\def\X {{\bf X}}
\def\Y {{\bf Y}}
\def\Z {{\bf Z}}
\def\hA{{\widehat \A}}
\def\hB{{\widehat \B}}
\def\hP{{\widehat \P}}
\def\hQ{{\widehat \Q}}
\def\tX{{\widetilde \X}}
\def\tY{{\widetilde \Y}}
\def\hX{{\widehat \X}}
\def\hY{{\widehat \Y}}
\def\QR {{\rm QR}}
\def\SVD {{\rm SVD}}
\def\tU{\widetilde{\U}}
\def\tV{\widetilde{\V}}
\def\tSigma{\widetilde{\mSigma}}
\def\hSigma{\widehat{\mSigma}}
\def\mSigma{{\bf{\Sigma}}}
\def\mOmega{{\bf{\Omega}}}
\title {Revisiting Co-Occurring Directions: Sharper Analysis and Efficient Algorithm
for Sparse Matrices}
\author {
    Luo Luo\textsuperscript{\rm 1},
    Cheng Chen\textsuperscript{\rm 2}\thanks{Corresponding Author},
    Guangzeng Xie\textsuperscript{\rm 3}, 
    Haishan Ye\textsuperscript{\rm 4} \\
}
\begin{document}

\maketitle

\begin{abstract}
We study the streaming model for approximate matrix multiplication (AMM). We are interested in the scenario that the algorithm can only take one pass over the data with limited memory. The state-of-the-art deterministic sketching algorithm for streaming AMM is the co-occurring directions (COD), which has much smaller approximation errors than randomized algorithms and outperforms other deterministic sketching methods empirically. In this paper, we provide a tighter error bound for COD whose leading term considers the potential approximate low-rank structure and the correlation of input matrices.  We prove COD is space optimal with respect to our improved error bound. We also propose a variant of COD for sparse matrices with theoretical guarantees. The experiments on real-world sparse datasets show that the proposed algorithm is more efficient than baseline methods.
\end{abstract}

\section{Introduction}

A large scale machine learning system usually receives data sequentially and it is often impossible to exactly store the entire data set. Thus, the approximate matrix multiplication (AMM) in the streaming fashion is an important and fundamental task for scientific computation and big data analysis. For example, the product of matrices from multi-modal datasets captures the correlation between different modalities. In addition, many classical algorithms including canonical correlation analysis~\cite{hotelling1992relations},
generalized eigenvector decomposition~\cite{gene1996matrix},
partial least squares~\cite{wegelin2000survey}, spectral co-clustering~\cite{dhillon2001co} require to perform approximate matrix multiplication when the data set is very large.
On the other hand, data matrices from real-world are usually low-rank and sparse, which motivated us to design efficient and effective sparse algorithms. 

This paper considers streaming AMM problem as follows. Give two large matrices $\X\in\BR^{n\times d_x}$ and $\Y\in\BR^{n\times d_y}$, we are interested in finding a low-rank estimator $\A^\top\B$ to approximate $\X^\top\Y$, where $\A\in\BR^{m\times d_x}$, $\B\in\BR^{m\times d_y}$ and $m$ is much smaller than $n$, $d_x$ and $d_y$. 
We focus on the row update model, that is, the algorithm receives rows of $\X$ and $\Y$ sequentially and it only takes one pass over input matrices with limited memory. 
The key challenge for this problem is to reduce the space/time complexity while maintaining the approximation error.

Inspired by the idea of finding frequent items~\cite{misra1982finding}, \citet{liberty2013simple} proposed frequent directions algorithm (FD), which considers the symmetric case of AMM such that $\X=\Y$ (a.k.a., the covariance sketching). 
FD achieves optimal tradeoffs between space cost and approximation error~\cite{woodruff2014low,ghashami2016frequent,ghashami2014relative}. 
Moreover, we can combine FD with subspace power iteration~\cite{woodruff14sketching,musco2015randomized} to design an algorithm which is efficient for sparse matrix multiplication~\cite{ghashami2016efficient}, called sparse frequent directions (SFD). 
Recently, \citet{huang2019near} integrated random sampling~\cite{drineas2006fast} into FD to reduce its time complexity. \citet{luo2019robust} introduced a regularization term for FD, which makes the estimator is more friendly to inverse operation. FD technique can also be used to accelerate many popular machine learning models, such as convex online optimization~\cite{luo2016efficient,luo2019robust}, factorization machine~\cite{luo2018sketched}, linear contextual bandits~\cite{kuzborskij2019efficient,chen2020bandit} and ridge regression~\cite{shi2020deterministic,dickens2020ridge}.

\citet{mroueh2016co} proposed a variant of FD called co-occurring directions (COD) for streaming AMM. COD shrinks the singular values of input matrices $\X$ and $\Y$ simultaneously at each iteration. It is shown that COD has significantly better performance than other sketching algorithms~\cite{ye2016frequent,drineas2006fast,clarkson2017low,sarlos2006improved} on AMM problem empirically. 
However, the existing spectral error bound of COD can not completely explain its high performance. It depends on the Frobenius norm of $\X$ and $\Y$, which ignores the potential low-rank structure of the data matrix. Specifically, in the case of $\X=\Y$, the procedure of COD degrades to FD, but its error bound is worse than that of FD. 
Another deterministic sketching method for AMM, which we call FD-AMM~\cite{ye2016frequent}, directly adopts FD to sketch the concatenated matrix $\Z=[\X,\Y]$. The output of the algorithm is an approximation of $\Z^\top\Z$, whose sub-matrix corresponds to an estimator of $\X^\top\Y$. 

In this paper, we provide a sharper analysis for co-occurring directions (COD). We give a new spectral norm error bound which considers the potential low-rank structure of the target matrix. Our bound could be much tighter than \citeauthor{mroueh2016co}'s~(\citeyear{mroueh2016co}) results when the spectrum of the exact matrix product is dominated by its top singular values. In addition, we prove that the space complexity of COD is optimal to attain our improved error bound. Furthermore, in the case of $\X=\Y$, our result matches the error bound of FD. 

We further propose sparse co-occurring directions (SCOD) and provide an error bound matches our improved analysis on standard COD while the running time of the algorithm mainly depends on the non-zero entries of input matrices. We conduct numerical experiments on cross-language datasets to show that SCOD has better performance than state-of-the-art algorithms empirically. Concurrent to our work, \citet{wan2020approximate} have also proposed a similar COD based algorithm to address streaming AMM with sparse inputs but their error bound does not consider the potential low-rankness.  

The rest of the paper is organized as follows. 
In Section~\ref{section:preliminaries}, define the notation used in this paper and introduce the
background of related algorithms for streaming AMM. 
In Section~\ref{section:error-bound}, we provide our new error bound for COD algorithm and show the corresponding space lower bound. 
In Section~\ref{section:SCOD}, we propose SCOD and give its theoretical guarantees.
In Section~\ref{section:experiments}, we conduct the numerical experiments to show the superiority of SCOD. We defer detailed proof of some lemmas and theorems into supplementary materials.
We conclude our work in Section~\ref{section:conclusion}.

\section{Notations and Preliminaries}\label{section:preliminaries}

In this section, we first introduce the notation will be used in this paper. Then we give the backgrounds of frequent directions and related algorithms for AMM.

\subsection{Notations}

We let $\I_p$ be the $p\times p$ identity matrix and ${\bf 0}_{p\times q}$ be the $p\times q$ matrix of all zeros. For an $p\times q$ matrix $\A=[A_{ij}]$, we denote $(\va^\ui)^\top$ be its $i$-th row, $\nnz(\A)$ be the number of non-zero entries of $\A$. The condensed singular value decomposition (SVD) of $\A$ is defined as $\U\mSigma\V^\top$ where $\U\in\BR^{m\times r}$ and $\V\in\BR^{m\times r}$ are column orthogonal, $\mSigma=\diag(\sigma_1(\A),\sigma_2(\A),\dots,\sigma_r(\A))$ with $\sigma_1(\A)\geq\sigma_2(\A)\geq\dots\geq\sigma_r(\A)>0$ places the nonzero singular values on its diagonal entries and $r$ is the rank of $\A$. 
We have $\sigma_i(\A)=0$ for any $i>r$.
Additionally, we let $\Norm{\A}_F =\sqrt{\sum_{i,j}A_{ij}^2}=\sqrt{\sum_{i=1}^r\sigma_i^2(\A)}$
be the Frobenius norm, $\norm{\A}=\sigma_1(\A)$ be the spectral norm, $\Norm{\A}_* =\sum_{i=1}^r\sigma_i(\A)$ be the nuclear norm and $\Norm{\A}_k =\sum_{i=1}^k\sigma_i(\A)$ be the Ky Fan $k$-norm. We also denote $\A_k$ as the best rank-$k$ approximation to $\A$ for any unitary invariant norms, that is, $\A_k=\sum_{i=1}^k \sigma_i(\A)\vu_i\vv_i^\top$, where $\vu_i$ and $\vv_i$ are the $i$-th column of $\U$ and $\V$ respectively.

\subsection{Frequent Directions}

Frequent directions~\cite{liberty2013simple,ghashami2016frequent} is a deterministic algorithm for covariance sketching. Given any matrix $\X\in\BR^{n\times d}$ and sketch size $m$ that is much smaller than $n$ and $d$, FD processes the rows of $\X$ one by one and produces a sketch matrix $\A\in\BR^{2m\times d}$ to approximate $\X^\top\X$ by $\A^\top\A$. We present the details of FD in Algorithm \ref{alg:FD}, which requires $\fO(md)$ space and $\fO(mnd)$ time complexity. The algorithm has the following theoretical guarantees. 
\begin{lem}[\citealt{ghashami2014relative,ghashami2016frequent}]\label{lem:FD}
    The output $\A$ of Algorithm \ref{alg:FD} satisfies
    \begin{align}\label{bound:FD}
        \norm{\X^\top\X-\A^\top\A} \leq \frac{1}{m-k}\left(\Norm{\X}_F^2-\Norm{\X_k}_F^2\right)
    \end{align}
    for any $k<m$.
\end{lem}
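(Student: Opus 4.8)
The plan is to control the error by the total amount of singular-value shrinkage the algorithm performs, and then to bound that total shrinkage by a Frobenius-norm budget. Concretely, I would (i) show via a positive-semidefinite ordering that $\norm{\X^\top\X-\A^\top\A}$ is at most the accumulated shrinkage $\Delta$, (ii) bound $\Delta$ below through a trace-accounting argument, and (iii) sharpen that accounting with Weyl's inequality so that the rank parameter $k$ enters, producing the denominator $m-k$.

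First I would fix per-iteration notation. Index iterations by the rows of $\X$: at iteration $i$ the $i$-th row $(\vx^\ui)^\top$ of $\X$ is absorbed into the sketch, yielding $\B^\ui$ before the shrinking step, and $\A^\ui$ after it; let $\delta_i\ge 0$ be the quantity subtracted from the squared singular values (with $\delta_i=0$ on iterations where no shrink occurs) and put $\Delta=\sum_i\delta_i$. Absorbing a row into a zero slot gives the Gram identity $\B^{\ui\top}\B^\ui=\A^{(i-1)\top}\A^{(i-1)}+\vx^\ui(\vx^\ui)^\top$, while writing the SVD $\B^\ui=\U\mSigma\V^\top$ shows $\A^{\ui\top}\A^\ui=\V\max(\mSigma^2-\delta_i\I,0)\V^\top$, so that each step loses the amount $\B^{\ui\top}\B^\ui-\A^{\ui\top}\A^\ui=\V\min(\mSigma^2,\delta_i\I)\V^\top$, which satisfies $0\preceq\B^{\ui\top}\B^\ui-\A^{\ui\top}\A^\ui\preceq\delta_i\I$. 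Summing the Gram identity over $i$ and using $\A^{(0)}=\vzero$ telescopes to $\X^\top\X-\A^\top\A=\sum_i(\B^{\ui\top}\B^\ui-\A^{\ui\top}\A^\ui)$, whence $0\preceq\X^\top\X-\A^\top\A\preceq\Delta\I$ and therefore $\norm{\X^\top\X-\A^\top\A}\le\Delta$. It remains to bound $\Delta$.

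Taking traces in the telescoping identity gives $\Norm{\X}_F^2-\Norm{\A}_F^2=\sum_i\tr\!\big(\min(\mSigma^2,\delta_i\I)\big)$. Because the sketch is maintained so that at every shrink at least $m$ squared singular values exceed $\delta_i$, each summand is at least $m\delta_i$, so $\Norm{\X}_F^2-\Norm{\A}_F^2\ge m\Delta$. For the matching upper bound I would apply Weyl's inequality to $\X^\top\X=\A^\top\A+E$ with $0\preceq E\preceq\Delta\I$, obtaining $\sigma_j(\X)^2-\sigma_j(\A)^2\le\Delta$ for every $j$; summing over $j=1,\dots,k$ and using $\Norm{\A_k}_F^2\le\Norm{\A}_F^2$ yields $\Norm{\X_k}_F^2-\Norm{\A}_F^2\le k\Delta$. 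Combining the two estimates, $m\Delta\le\Norm{\X}_F^2-\Norm{\A}_F^2=\big(\Norm{\X}_F^2-\Norm{\X_k}_F^2\big)+\big(\Norm{\X_k}_F^2-\Norm{\A}_F^2\big)\le\big(\Norm{\X}_F^2-\Norm{\X_k}_F^2\big)+k\Delta$, which rearranges to $(m-k)\Delta\le\Norm{\X}_F^2-\Norm{\X_k}_F^2$, and the claimed bound follows from $\norm{\X^\top\X-\A^\top\A}\le\Delta$.

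The step requiring the most care is the trace lower bound $\Norm{\X}_F^2-\Norm{\A}_F^2\ge m\Delta$, since it is the only place where the specific shrinking rule of Algorithm~\ref{alg:FD} enters. The spectral bound alone gives only $\norm{\X^\top\X-\A^\top\A}\le\Delta$ with no control on $\Delta$; it is the fact that each shrink reduces at least $m$ surviving singular directions by the full amount $\delta_i$ (rather than merely one direction) that produces the factor $m$, and pairing this with the Weyl estimate $\Norm{\X_k}_F^2-\Norm{\A}_F^2\le k\Delta$ is precisely what upgrades the rate to $1/(m-k)$. The remaining ingredients---the semidefinite ordering, the telescoping, and Weyl's inequality---are standard, and the final combination is elementary algebra.
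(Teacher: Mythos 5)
Your proof is correct and is essentially the argument the paper relies on: the paper cites \cite{ghashami2014relative,ghashami2016frequent} for Lemma~\ref{lem:FD} rather than proving it, and the standard proof in those references has exactly your structure (error bounded by the accumulated shrinkage $\Delta$ via telescoping and the PSD ordering, the trace budget $\Norm{\X}_F^2-\Norm{\A}_F^2\ge m\Delta$, and a $k\Delta$ correction yielding the $m-k$ denominator), which is also the skeleton the paper generalizes to COD in Lemmas~\ref{lem:bound:sigma}--\ref{lem:bound:nuclear} and Theorem~\ref{thm:dense-error}. The only cosmetic difference is that you derive $\Norm{\X_k}_F^2-\Norm{\A}_F^2\le k\Delta$ from Weyl's inequality, whereas the cited proof tests $\X^\top\X-\A^\top\A\preceq\Delta\I_d$ against the top-$k$ right singular vectors of $\X$; these are equivalent one-line steps.
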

\citet{ghashami2016frequent} also prove FD is space optimal with respect to the guaranteed accuracy in Lemma~\ref{lem:FD}.
Note that the shrinking step in line~\ref{line:shrink} of the algorithm is necessary because the output could be extremely worse without this operation~\cite{desai2016improved,luo2019robust}.

\begin{algorithm}[ht] 
	\caption{Frequent Directions (FD)}\label{alg:FD}
	\begin{algorithmic}[1]
	    \STATE \textbf{Input:} $\X\in\BR^{n\times d}$ and sketch size $m$\\[0.1cm]
	    \STATE $\A\gets\vzero_{2m\times d}$ \\[0.1cm]
		\STATE \textbf{for} $t=1,2,\dots,n$ \\[0.1cm]
		\STATE\quad insert $(\vx^\ut)^\top$ into a zero valued row of $\A$ \\[0.1cm]
		\STATE\quad \textbf{if} $\A$ has no zero valued rows \textbf{then} \\[0.1cm]
        \STATE\label{line:fd1}\quad\quad $[\U, \mSigma, \V]\gets \SVD(\A)$ \\[0.1cm]
        \STATE\quad\quad $\delta \gets \sigma^2_m(\A)$ \\[0.1cm]
        \STATE\label{line:shrink}\quad\quad $\hSigma \gets \sqrt{\max\left(\mSigma^2-\delta\I_{2m},\vzero_{2m\times2m}\right)}$ \\[0.1cm]
        \STATE\label{line:fd2}\quad\quad $\A \gets \hSigma\V^\top$  \\[0.1cm]
        \STATE\quad \textbf{end if} \\[0.1cm]
		\STATE \textbf{end for} \\[0.1cm]
		\STATE \textbf{Output:} $\A$ \\[0.1cm]
	\end{algorithmic}
\end{algorithm}

\subsection{Sketching Algorithms for AMM}

It is natural to exploit the idea of FD to solve general AMM problem~\cite{ye2016frequent}. 
We can concatenates the input matrix $\X\in\BR^{n\times d_x}$ and $\Y\in\BR^{n\times d_y}$ to construct a larger matrix $\Z=[\X,\Y]\in\BR^{n\times(d_x+d_y)}$, and then apply FD on $\Z$ to approximate $\Z^\top\Z$ by $\C^\top\C$, where $\C=[\A,\B]$, $\A\in\BR^{n\times d_x}$ and $\B\in\BR^{n\times d_y}$. 
The top right sub-matrix of the $\C^\top\C$, i.e.,the matrix $\A^\top\B$ is an approximation of $\X^\top\Y$. Intuitively, this algorithm wastes a large proportion of cost to approximate $\X^\top\X$ and $\Y^\top\Y$ (the other sub-matrices of $\Z^\top\Z$), which is unnecessary for the AMM task.

\citet{mroueh2016co} proposed the
co-occurring directions (COD) for AMM. We present its detailed procedure in Algorithm \ref{alg:COD}. Each iteration of COD constructs the column basis of $\A$ and $\B$ by QR factorization independently and executes the shrinkage step on the small interaction matrix $\R_x\R_y^\top$. 
We point out that both COD and FD-AMM requires $\fO(m(d_x+d_y))$ space and $\fO(mn(d_x+d_y))$ time complexity.
However, COD looks more reasonable than FD-AMM since all of its operations surrounds approximating $\X^\top\Y$. The numerical experiments~\cite{mroueh2016co} show that COD performs significantly better than FD-AMM~\cite{ye2016frequent} and other AMM algorithms~\cite{drineas2006fast,clarkson2017low,sarlos2006improved} when input matrices is dense. 
We can prove that COD holds the guaranteed accuracy as follows.
\begin{lem}[\citealt{mroueh2016co}]\label{lem:COD}
The output $\A$ and $\B$ of Algorithm \ref{alg:COD} satisfies
\begin{align}\label{bound:COD}
    \norm{\X^\top\Y-\A^\top\B} \leq \frac{\Norm{\X}_F\Norm{\Y}_F}{m}.
\end{align}
\end{lem}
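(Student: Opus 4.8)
The plan is to follow the template of the frequent directions analysis in Lemma~\ref{lem:FD}, adapted to the bilinear setting of a non-symmetric product. Throughout, index the outer loop of Algorithm~\ref{alg:COD} by $t$, write $\X_t$ and $\Y_t$ for the matrices formed by the first $t$ rows of $\X$ and $\Y$, and let $\A_t,\B_t$ denote the sketches after the $t$-th row pair has been processed. Let $\delta_t$ be the amount subtracted from the singular values of the interaction matrix at iteration $t$ (with $\delta_t=0$ on iterations where no shrinkage is triggered), and set $\Delta=\sum_t\delta_t$. The argument splits into two independent estimates: first, that the spectral error is controlled by the total shrinkage, $\norm{\X^\top\Y-\A^\top\B}\le\Delta$; and second, that $m\Delta\le\Norm{\X}_F\Norm{\Y}_F$. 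Multiplying through, the two combine to give the claim.

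For the first estimate I would track the difference $\X_t^\top\Y_t-\A_t^\top\B_t$ across iterations. Inserting the new rows adds the same rank-one term $\vx^\ut(\vy^\ut)^\top$ to the exact product $\X_t^\top\Y_t$ and to the pre-shrinkage sketch product, so these contributions cancel in the difference and only the shrinkage step contributes a net change. Writing the QR factorizations $\A^\top=\Q_x\R_x$, $\B^\top=\Q_y\R_y$ and the SVD $\R_x\R_y^\top=\U\mSigma\V^\top$, the reconstructed sketch satisfies $\A^\top\B=\Q_x\U\hSigma\V^\top\Q_y^\top$, so the error introduced at iteration $t$ equals $\Q_x\U(\mSigma-\hSigma)\V^\top\Q_y^\top$. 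Because $\Q_x,\Q_y,\U,\V$ all have orthonormal columns and $\mSigma-\hSigma=\min(\mSigma,\delta_t\I)$ as diagonal matrices, this increment has spectral norm exactly $\delta_t$. Telescoping over $t$ and applying the triangle inequality then yields $\norm{\X^\top\Y-\A^\top\B}\le\sum_t\delta_t=\Delta$.

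For the second estimate I would use the nuclear norm of the sketched product as a potential, $\Phi_t=\Norm{\A_t^\top\B_t}_*$. Since $\A^\top\B=\Q_x(\R_x\R_y^\top)\Q_y^\top$ with $\Q_x,\Q_y$ column-orthonormal, $\Phi_t$ equals the nuclear norm of the interaction matrix, the very object the shrinkage acts on. Each insertion adds a rank-one matrix and hence raises $\Phi_t$ by at most $\Norm{\vx^\ut(\vy^\ut)^\top}_*=\norm{\vx^\ut}\norm{\vy^\ut}$, while each shrinkage lowers it by $\sum_i\min(\sigma_i,\delta_t)\ge m\delta_t$, using that the interaction is $2m\times2m$ and carries at least $m$ singular values no smaller than the threshold $\delta_t=\sigma_m(\R_x\R_y^\top)$. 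Starting from $\Phi_0=0$ and using $\Phi_n\ge0$, these two bookkeeping facts give $m\Delta\le\sum_t\norm{\vx^\ut}\norm{\vy^\ut}$, and Cauchy--Schwarz closes the estimate via $\sum_t\norm{\vx^\ut}\norm{\vy^\ut}\le\Norm{\X}_F\Norm{\Y}_F$.

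The main obstacle is isolating the correct potential and pinning down the per-shrinkage decrement. Everything hinges on recognizing that, for a non-symmetric product, the natural quantity is the nuclear norm, a linear functional of the interaction's singular values, rather than a squared Frobenius norm as in FD, and on verifying the $m\delta_t$ lower bound, which requires the interaction to retain at least $m$ singular values above the threshold. The gauge freedom in splitting the sketch between $\A$ and $\B$ (one may replace $\A^\top,\B^\top$ by $\A^\top\G,\B^\top\G^{-\top}$ without changing $\A^\top\B$) must also be handled so that the potential is well defined; phrasing everything through $\Norm{\A^\top\B}_*$ instead of $\Norm{\A}_F$ and $\Norm{\B}_F$ separately sidesteps this difficulty.
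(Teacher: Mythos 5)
Your proposal is correct and follows essentially the same route as the paper: the paper's Lemma~\ref{lem:bound:sigma} (proved in Appendix~\ref{appdendix:proof-sigma}) establishes exactly your two estimates --- the telescoping bound $\norm{\X^\top\Y-\A^\top\B}\le\sum_t\delta^\ut$ and the nuclear-norm potential bound $\Norm{\A^\top\B}_*\le\Norm{\X}_F\Norm{\Y}_F-m\sum_t\delta^\ut$ via rank-one insertions and the $m\delta_t$ shrinkage decrement --- and Lemma~\ref{lem:COD} then follows by dropping the nonnegative nuclear norm, just as you do. Your observations about the exact per-step error $\delta_t$ and Cauchy--Schwarz closing the argument match the paper's proof as well.
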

Unfortunately, the result of Lemma~\ref{lem:COD} does not reveal the advantage of COD entirely. Consider that case of $\X=\Y$, the procedure of COD will reduce to FD, but the error bound of (\ref{bound:COD}) becomes a special case of (\ref{bound:FD}) in Lemma \ref{lem:FD} with $k=0$. The real-world dataset typically enjoys some approximate low-rank structure, which leads to the right-hand side of bound (\ref{bound:FD}) could be much 
smaller than the one of (\ref{bound:COD}). Hence although COD has better empirical performance, the existing error bounds are not tight enough. 

\begin{algorithm}[ht] 
	\caption{Co-Occurring Directions (COD)}\label{alg:COD}
	\begin{algorithmic}[1]
	    \STATE \textbf{Input:} $\X\in\BR^{n\times d_x}$, $\Y\in\BR^{n\times d_y}$ and sketch size $m$\\[0.1cm]
	    \STATE $\A\gets\vzero_{2m\times d_x}$ \\[0.1cm] 
	    \STATE $\B\gets\vzero_{2m\times d_y}$ \\[0.1cm]
		\STATE \textbf{for} $t=1,2,\dots,n$ \\[0.1cm]
		\STATE\label{line:inesert2A}\quad insert $(\vx^\ut)^\top$ into a zero valued row of $\A$ \\[0.1cm]
		\STATE\label{line:inesert2B}\quad insert $(\vy^\ut)^\top$ into a zero valued row of $\B$ \\[0.1cm]
		\STATE\quad \textbf{if} $\A$ or $\B$ has no zero valued rows \textbf{then} \\[0.1cm]
        \STATE\label{line:cod1}\quad\quad $(\Q_x,\R_x)\gets \QR\left(\A^\top\right)$\\[0.1cm]
		\STATE\quad\quad $(\Q_y,\R_y)\gets \QR\left(\B^\top\right)$\\[0.1cm]
		\STATE\quad\quad $[\U, \mSigma, \V]\gets \SVD(\R_x\R_y^\top)$ \\[0.1cm]
		\STATE\quad\quad $\delta \gets \sigma_m(\R_x\R_y^\top)$ \\[0.1cm]
        \STATE\quad\quad $\hSigma \gets \max\left(\mSigma-\delta\I_{2m},\vzero_{2m\times2m}\right)$ \\[0.1cm]
        \STATE\quad\quad $\A\gets\hSigma^{1/2}\U^\top\Q_x^\top$ \\[0.1cm]
		\STATE\label{line:cod2}\quad\quad $\B\gets\hSigma^{1/2}\V^\top\Q_y^\top$ \\[0.1cm]
        \STATE\quad \textbf{end if} \\[0.1cm]
		\STATE \textbf{end for} \\[0.1cm]
		\STATE \textbf{Output:} $\A$ and $\B$ \\[0.1cm]
	\end{algorithmic}
\end{algorithm}

\section{Sharper Analysis for COD}\label{section:error-bound}

In this section, we provide a tighter error bound for COD. We let $\delta^\ut$ be the value of $\delta$ at time step $t$. If
the algorithm does not enter the ``then'' section in the $t$-th step, then we have $\delta^\ut = 0$. Similarly, let $\A^\ut$, $\B^\ut$, $\Q_{x}^\ut$, $\Q_{y}^\ut$, $\U^\ut$, $\mSigma^\ut$, $\V^\ut$ and $\hSigma^\ut$ be the  corresponding variables after the main loop has been executed for $t$ times. Additionally, we use $\hA^\ut$ and $\hB^\ut$ to represent the matrices after insert operations (line \ref{line:inesert2A}-\ref{line:inesert2B}) have been executed at the $t$-th iteration. We need the following two lemmas for  proving our main results.
\begin{lem}[\citealt{mroueh2016co}]\label{lem:bound:sigma} 
The output matrices $\A$ and $\B$ of Algorithm \ref{alg:COD} satisfy
\begin{align}\label{boudn:delta1}
    \norm{\X^\top\Y-\A^\top\B} \leq \sum_{t=1}^n \delta^\ut 
\end{align}
and
\begin{align}\label{boudn:delta2}
    \big\|\A^\top\B\big\|_* \leq \big\|\X\big\|_F\big\|\Y\big\|_F - m\sum_{t=1}^n \delta^\ut. 
\end{align}
\end{lem}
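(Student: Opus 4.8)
The plan is to analyze how the maintained product $\A^{(t)\top}\B^{(t)}$ evolves over a single iteration and then sum the per-step contributions; both inequalities fall out of the same decomposition. After the insertions (lines~\ref{line:inesert2A}--\ref{line:inesert2B}) the new rows occupy the same zero row index in $\A$ and $\B$ (shrinkage zeros out identical rows of both sketches), so the cross terms cancel and $\hA^{(t)\top}\hB^\ut = \A^{(t-1)\top}\B^{(t-1)} + \vx^\ut(\vy^\ut)^\top$. On a shrinkage iteration the algorithm writes $\hA^{(t)\top}\hB^\ut = \Q_x^\ut\U^\ut\mSigma^\ut\V^{(t)\top}\Q_y^{(t)\top}$ and then replaces $\mSigma^\ut$ by $\hSigma^\ut$, giving $\A^{(t)\top}\B^\ut = \Q_x^\ut\U^\ut\hSigma^\ut\V^{(t)\top}\Q_y^{(t)\top}$. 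Thus the one-step error is $\E^\ut := \hA^{(t)\top}\hB^\ut - \A^{(t)\top}\B^\ut = (\Q_x^\ut\U^\ut)(\mSigma^\ut-\hSigma^\ut)(\Q_y^\ut\V^\ut)^\top$ (with $\E^\ut=\vzero$ and $\delta^\ut=0$ on non-shrinkage iterations). Since $\Q_x^\ut\U^\ut$ and $\Q_y^\ut\V^\ut$ have orthonormal columns and $\mSigma^\ut-\hSigma^\ut$ is diagonal with top-$m$ entries equal to $\delta^\ut$ and the remaining entries equal to $\sigma_i(\R_x\R_y^\top)\le\delta^\ut$, we get $\norm{\E^\ut}=\delta^\ut$.

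For \eqref{boudn:delta1}, the two displays combine into the recursion $\A^{(t)\top}\B^\ut = \A^{(t-1)\top}\B^{(t-1)} + \vx^\ut(\vy^\ut)^\top - \E^\ut$. Telescoping from $t=1$ to $n$, using $\A^{(0)}=\B^{(0)}=\vzero$ and $\X^\top\Y=\sum_{t=1}^n\vx^\ut(\vy^\ut)^\top$, yields $\X^\top\Y-\A^\top\B = \sum_{t=1}^n\E^\ut$; the triangle inequality and $\norm{\E^\ut}=\delta^\ut$ then give the bound.

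For \eqref{boudn:delta2}, I track the potential $\Phi_t := \tfrac12(\Norm{\A^\ut}_F^2+\Norm{\B^\ut}_F^2)$, which an insertion increases by exactly $\tfrac12(\norm{\vx^\ut}^2+\norm{\vy^\ut}^2)$. The claim is the per-step inequality $\Phi_t + m\delta^\ut \le \Phi_{t-1} + \tfrac12(\norm{\vx^\ut}^2+\norm{\vy^\ut}^2)$, which is immediate on non-shrinkage iterations and on shrinkage iterations follows from three facts: (i) after shrinkage $\Norm{\A^\ut}_F^2=\Norm{\B^\ut}_F^2=\Tr(\hSigma^\ut)$, so $\Phi_t=\Tr(\hSigma^\ut)$; (ii) because $\delta^\ut=\sigma_m(\R_x\R_y^\top)$, the shrinkage drops the trace by $\Tr(\mSigma^\ut)-\Tr(\hSigma^\ut)=m\delta^\ut+\sum_{i>m}\sigma_i(\R_x\R_y^\top)\ge m\delta^\ut$; and (iii) the generic inequality $\Norm{\M\N}_*\le\Norm{\M}_F\Norm{\N}_F$ together with AM--GM gives $\Tr(\mSigma^\ut)=\Norm{\hA^{(t)\top}\hB^\ut}_*\le\Norm{\hA^\ut}_F\Norm{\hB^\ut}_F\le\tfrac12(\Norm{\hA^\ut}_F^2+\Norm{\hB^\ut}_F^2)=\Phi_{t-1}+\tfrac12(\norm{\vx^\ut}^2+\norm{\vy^\ut}^2)$. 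Chaining (i)--(iii) proves the per-step bound. Telescoping from $\Phi_0=0$ and using $\Norm{\A^\top\B}_*\le\Norm{\A}_F\Norm{\B}_F\le\Phi_n$ gives $\Norm{\A^\top\B}_* + m\sum_{t=1}^n\delta^\ut \le \tfrac12(\Norm{\X}_F^2+\Norm{\Y}_F^2)$.

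The remaining step converts the arithmetic mean into the product $\Norm{\X}_F\Norm{\Y}_F$, and this is the part I expect to require the most care, since $\tfrac12(\Norm{\X}_F^2+\Norm{\Y}_F^2)\ge\Norm{\X}_F\Norm{\Y}_F$ runs the wrong way. The resolution is scale invariance: replacing $(\X,\Y)$ by $(c\X,c^{-1}\Y)$ for any $c>0$ leaves every interaction matrix $\R_x\R_y^\top$ unchanged, hence leaves $\A^\top\B$ and every $\delta^\ut$ unchanged, so the left-hand side $\Norm{\A^\top\B}_*+m\sum_t\delta^\ut$ is invariant while the right-hand side becomes $\tfrac12(c^2\Norm{\X}_F^2+c^{-2}\Norm{\Y}_F^2)$. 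Minimizing over $c$ (optimum at $c^2=\Norm{\Y}_F/\Norm{\X}_F$) yields exactly $\Norm{\X}_F\Norm{\Y}_F$, completing the proof. The crux throughout is the coupling between the two sketches: establishing that shrinkage keeps $\Norm{\A}_F^2=\Norm{\B}_F^2$ so the single potential $\Phi_t$ behaves well, and then using scale invariance to recover the sharp product form rather than the weaker arithmetic mean.
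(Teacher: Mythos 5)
Your proof of inequality (\ref{boudn:delta1}) is essentially the paper's own argument: both telescope $\X^\top\Y-\A^\top\B=\sum_{t=1}^n\big(\hA^{(t)\top}\hB^\ut-\A^{(t)\top}\B^\ut\big)$ and bound each term by $\big\|\mSigma^\ut-\hSigma^\ut\big\|_2=\delta^\ut$, so there is nothing to compare there.

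Your proof of inequality (\ref{boudn:delta2}) is correct but takes a genuinely different route. The paper telescopes the nuclear norm of the product itself: each insertion increases $\Norm{\A^\top\B}_*$ by at most $\Norm{\vx^\ut(\vy^\ut)^\top}_*=\norm{\vx^\ut}\norm{\vy^\ut}$ (triangle inequality for $\Norm{\cdot}_*$), each shrinkage decreases it by at least $m\delta^\ut$, and the product form appears at the very end from a single Cauchy--Schwarz over $t$, namely $\sum_t\norm{\vx^\ut}\norm{\vy^\ut}\le\Norm{\X}_F\Norm{\Y}_F$. You instead track the Frobenius potential $\tfrac12(\Norm{\A}_F^2+\Norm{\B}_F^2)$, which insertions change exactly, pass between the potential and the nuclear norm via $\Norm{\hA^\top\hB}_*\le\Norm{\hA}_F\Norm{\hB}_F$, and then repair the wrong-way AM--GM step by exploiting the invariance of COD under $(\X,\Y)\mapsto(c\X,c^{-1}\Y)$. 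The paper's route is shorter and stays within one norm; yours isolates a structural fact about COD (the output product and the $\delta^\ut$'s depend only on the inputs up to relative rescaling) that is interesting in itself and explains \emph{why} the bound can involve the product $\Norm{\X}_F\Norm{\Y}_F$ rather than the two norms separately.

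One caveat: your parenthetical justification of the scale invariance is too quick. It is literally true at the \emph{first} shrinkage that $\R_x\to c\R_x$ and $\R_y\to c^{-1}\R_y$, so $\R_x\R_y^\top$ is unchanged; but after that shrinkage the two runs' sketches are no longer scalar multiples of one another (the shrunk sketch $\hSigma^{1/2}\U^\top\Q_x^\top$ is identical in both runs, while newly inserted rows are scaled), so at later iterations the interaction matrices agree only up to orthogonal equivalence, $W_x^\top(\R_x\R_y^\top)W_y$, not as matrices. What survives, and what your argument actually needs, is an induction on $t$: the products $\A^{(t)\top}\B^\ut$ and the values $\delta^\ut$ coincide across the two runs, because (a) matched insertions change the product by exactly $\vx^\ut(\vy^\ut)^\top$ in both runs, and (b) the shrinkage step is singular-value soft-thresholding of $\hA^{(t)\top}\hB^\ut$ with threshold $\delta^\ut=\sigma_m$, a well-defined map independent of which SVD the algorithm happens to compute, and the zero-row pattern (hence the shrinkage schedule) also matches. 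With that induction spelled out, your proof is complete.
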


\begin{lem}\label{lem:bound:nuclear}
The output of Algorithm~\ref{alg:COD} holds that
{\small\begin{align}\label{ieq:bound:nuclear}
\big\|\X^\top\Y\big\|_*-\big\|\A^\top\B\big\|_* 
\leq \sum_{i=k+1}^d \sigma_i(\X^\top\Y) +  k\sum_{t=1}^n\delta^\ut.
\end{align}}
\end{lem}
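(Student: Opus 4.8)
The plan is to deduce the nuclear-norm gap from the spectral-norm gap that Lemma~\ref{lem:bound:sigma} already controls, combined with Weyl's perturbation inequality for singular values. Set $\Delta := \sum_{t=1}^n \delta^\ut$; then \eqref{boudn:delta1} gives $\norm{\X^\top\Y - \A^\top\B} \leq \Delta$, so every singular value of $\A^\top\B$ lies within $\Delta$ of the corresponding singular value of $\X^\top\Y$. This single fact, applied termwise, will drive the whole estimate.

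First I would split the target's nuclear norm into a head and tail around index $k$, writing $\Norm{\X^\top\Y}_* = \Norm{\X^\top\Y}_k + \sum_{i=k+1}^d \sigma_i(\X^\top\Y)$. The tail matches the first term on the right-hand side of the claim exactly, so it remains only to show $\Norm{\X^\top\Y}_k - \Norm{\A^\top\B}_* \leq k\Delta$. Next I would weaken $\Norm{\A^\top\B}_*$ to its Ky Fan $k$-norm via the elementary inequality $\Norm{\A^\top\B}_* \geq \Norm{\A^\top\B}_k$ (the nuclear norm sums all singular values, whereas the Ky Fan $k$-norm sums only the top $k$). This reduces the problem to bounding the difference of two Ky Fan $k$-norms, namely $\Norm{\X^\top\Y}_k - \Norm{\A^\top\B}_k = \sum_{i=1}^k \bigl(\sigma_i(\X^\top\Y) - \sigma_i(\A^\top\B)\bigr)$.

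Finally I would apply Weyl's inequality $|\sigma_i(\M) - \sigma_i(\N)| \leq \norm{\M - \N}$ termwise with $\M = \X^\top\Y$ and $\N = \A^\top\B$ (both of size $d_x \times d_y$), so that each of the $k$ summands is at most $\Delta$; summing yields $k\Delta$ and closes the chain of inequalities. I do not expect a serious obstacle, since the argument is short once the decomposition is in place; the only points needing care are invoking the singular-value perturbation bound in the correct direction and using $\Norm{\A^\top\B}_* \geq \Norm{\A^\top\B}_k$ so that the discarded singular values of the sketch only help the inequality rather than hurt it.
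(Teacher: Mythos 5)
Your proof is correct, and it takes a genuinely different route from the paper's. The paper proves Lemma~\ref{lem:bound:nuclear} by re-opening the algorithm: it expresses $\X^\top\Y-\A^\top\B$ through the per-step shrinkage terms $\delta^\ut\,\Q_{x}^\ut\U^\ut\V^{\ut\top}\Q_y^{\ut\top}$ and controls the head of the nuclear norm via the trace-maximization characterization of the Ky Fan $k$-norm (Lemma~\ref{lem:max-Fan-k}), using that $\Q_x^\ut\U^\ut$ and $\Q_y^\ut\V^\ut$ have orthonormal columns, so each such direction has Ky Fan $k$-norm at most $k$. You instead treat the already-established spectral bound \eqref{boudn:delta1} of Lemma~\ref{lem:bound:sigma} as a black box and combine it with Weyl's singular-value perturbation inequality and the trivial bound $\Norm{\A^\top\B}_*\geq\Norm{\A^\top\B}_k$; there is no circularity here, since \eqref{boudn:delta1} is proved independently of the present lemma (by telescoping and the triangle inequality), and Theorem~\ref{thm:dense-error} is free to invoke both lemmas. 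Your argument in fact establishes the more general perturbation statement that for any two matrices $\M,\N$ of equal size and any $k$, $\Norm{\M}_*-\Norm{\N}_*\leq\sum_{i>k}\sigma_i(\M)+k\norm{\M-\N}$, which makes the lemma independent of COD's internal structure and noticeably shorter. What the paper's route buys in exchange is self-containedness within the Ky Fan machinery it sets up (no appeal to Weyl's inequality) and a bound phrased directly in terms of the algorithm's shrinkage amounts $\delta^\ut$ rather than through an intermediate spectral-norm estimate. Both arguments yield exactly the same constant, so nothing is lost quantitatively either way.
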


Lemma \ref{lem:bound:nuclear} is the key lemma of our proof. It improves the result in analysis of COD~\cite{mroueh2016co}. The term $\sum_{i=k+1}^d \sigma_i(\X^\top\Y)$ on the right-hand side of (\ref{ieq:bound:nuclear}) considers the potential approximate low-rank structure of $\X^\top\Y$, which leads to a tighter error bound of COD as follows.   

\begin{thm}\label{thm:dense-error}
The output of Algorithm~\ref{alg:COD} holds that
{\begin{align*}
    \norm{\X^\top\Y-\A^\top\B}
\leq \frac{1}{m-k}\Big(\Norm{\X}_F\Norm{\Y}_F-\big\|\X^\top\Y\big\|_k\Big).
\end{align*}}
for any $k<m$.
\end{thm}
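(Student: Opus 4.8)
The plan is to combine the two lemmas that precede the theorem, namely Lemma~\ref{lem:bound:sigma} and Lemma~\ref{lem:bound:nuclear}, and to eliminate the unknown quantity $S := \sum_{t=1}^n \delta^\ut$ between them. The spectral error is controlled from above by $S$ via \eqref{boudn:delta1}, so the entire task reduces to producing a good upper bound on $S$. The idea is that $S$ cannot be too large, because each shrinkage step that contributes to $S$ also removes nuclear mass from the sketch, and the sketch $\A^\top\B$ must still retain enough nuclear mass to be a faithful approximation of the top of the spectrum of $\X^\top\Y$.

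\medskip

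\noindent\textbf{Key steps.} First I would invoke Lemma~\ref{lem:bound:nuclear}, which after rearranging reads
\begin{align*}
    \Norm{\A^\top\B}_* \geq \Norm{\X^\top\Y}_* - \sum_{i=k+1}^d \sigma_i(\X^\top\Y) - k\,S = \big\|\X^\top\Y\big\|_k - k\,S,
\end{align*}
using that $\Norm{\X^\top\Y}_* - \sum_{i=k+1}^d \sigma_i(\X^\top\Y) = \sum_{i=1}^k \sigma_i(\X^\top\Y) = \big\|\X^\top\Y\big\|_k$ by definition of the Ky Fan $k$-norm. Next I would invoke the second bound of Lemma~\ref{lem:bound:sigma}, \eqref{boudn:delta2}, which gives an upper bound on the same nuclear norm:
\begin{align*}
    \Norm{\A^\top\B}_* \leq \Norm{\X}_F\Norm{\Y}_F - m\,S.
\end{align*}
Chaining these two inequalities eliminates $\Norm{\A^\top\B}_*$ and yields
\begin{align*}
    \big\|\X^\top\Y\big\|_k - k\,S \leq \Norm{\X}_F\Norm{\Y}_F - m\,S,
\end{align*}
which I can solve for $S$ to obtain $(m-k)\,S \leq \Norm{\X}_F\Norm{\Y}_F - \big\|\X^\top\Y\big\|_k$, hence
\begin{align*}
    S \leq \frac{1}{m-k}\Big(\Norm{\X}_F\Norm{\Y}_F - \big\|\X^\top\Y\big\|_k\Big),
\end{align*}
valid since $k < m$. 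Finally, substituting this into \eqref{boudn:delta1}, i.e.\ $\norm{\X^\top\Y - \A^\top\B} \leq S$, delivers exactly the claimed bound.

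\medskip

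\noindent\textbf{Main obstacle.} The genuinely hard part is not the algebra above—that is a short elimination—but rather Lemma~\ref{lem:bound:nuclear} itself, which is flagged in the text as the key lemma and is what separates this sharper result from \citeauthor{mroueh2016co}'s original bound. Since the excerpt lets me assume Lemma~\ref{lem:bound:nuclear}, the theorem's proof is essentially immediate; the real work has been front-loaded into establishing that the nuclear mass retained by the sketch is at least $\big\|\X^\top\Y\big\|_k - k\,S$. I would double-check two small points: that $k < m$ is exactly the regime in which dividing by $m-k$ is legitimate (so the bound is stated for the same range as Lemma~\ref{lem:FD}), and that the replacement of $\sum_{i=k+1}^d \sigma_i$ by the complement $\big\|\X^\top\Y\big\|_k$ correctly uses $d = \min(d_x,d_y) \geq \operatorname{rank}(\X^\top\Y)$ so that the full nuclear norm is captured. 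Beyond these bookkeeping checks, no further estimates are needed.
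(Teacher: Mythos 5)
Your proof is correct and is essentially identical to the paper's: both eliminate $\sum_{t=1}^n \delta^{(t)}$ by sandwiching $\Norm{\A^\top\B}_*$ between the lower bound from Lemma~\ref{lem:bound:nuclear} and the upper bound from inequality~(\ref{boudn:delta2}), then feed the resulting bound on the shrinkage sum into inequality~(\ref{boudn:delta1}). As a minor aside, your final substitution correctly targets (\ref{boudn:delta1}), whereas the paper's closing sentence cites (\ref{boudn:delta2}) --- an evident typo that your write-up silently fixes.
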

\begin{proof}
Let $\Delta=\sum_{t=1}^n\delta^\ut$.
Connecting inequality (\ref{boudn:delta2}) in Lemma \ref{lem:bound:sigma} and the result of Lemma \ref{lem:bound:nuclear}, we have
\begin{align*}
  & m\Delta + \Norm{\X^\top\Y}_* - \Norm{\X}_F\Norm{\Y}_F \\
\leq & \Norm{\X^\top\Y}_*  -  \Norm{\A^\top\B}_*
\leq \sum_{i=k+1}^d \sigma_i(\X^\top\Y) +  k\Delta,
\end{align*}
that is 
$\Delta \leq \frac{1}{m-k}\left(\Norm{\X}_F\Norm{\Y}_F  -  \Norm{\X^\top\Y}_k\right)$.
Substituting above bound of $\Delta$ into inequality (\ref{boudn:delta2}) of Lemma \ref{lem:bound:sigma}, 
we finish the proof of this theorem.
\end{proof}

To achieve the accuracy that $\norm{\X^\top\Y-\A^\top\B}\leq\eps$, the previous error bound (Lemma~\ref{lem:COD}) requires the sketch size to be at least $m_1=\frac{1}{\eps}\Norm{\X}_F\Norm{\Y}_F$,
while Theorem~\ref{thm:dense-error} only requires the sketch size
$m_2=k+\frac{1}{\eps}\left(\Norm{\X}_F\Norm{\Y}_F-\Norm{\X^\top\Y}_k\right)$.
When input matrices $\X$ and $\Y$ have strongly correlation and approximate low-rank structure, $m_2$ could be much smaller than $m_1$.

In addition, the error bound of Theorem~\ref{thm:dense-error} matches that of FD (Lemma \ref{lem:FD}) when $\X=\Y$:
{\small\begin{align*}
  &  \norm{\X^\top\X-\A^\top\A} 
\leq  \frac{1}{m-k}\big(\Norm{\X}_F\Norm{\X}_F  -  \sum_{i=1}^k \sigma_i(\X^\top\X)\big) \\ 
& =  \frac{1}{m-k}\big(\Norm{\X}_F^2  -  \sum_{i=1}^k \sigma_i^2(\X)\big) 
=  \frac{1}{m-k}\left(\Norm{\X}_F^2-\Norm{\X_k}_F^2\right)\!.
\end{align*}}
On the other hand, the previous error bound (Lemma~\ref{lem:COD}) is worse than that of FD (Lemma~\ref{lem:FD}) in the symmetric case of $\X=\Y$. 

\section{Space Lower Bounds Analysis}\label{section:lower-bound}

In this section, we show that COD is space optimal with respect to our new error bound in Theorem \ref{thm:dense-error}. We first introduce the following lemma for low-rank matrices.

\begin{lem}[\citealt{kapralov2013differentially}]\label{lem:exp}
    For each $\delta > 0$ there exits a set of matrices $\fQ = \{\Q_1, \cdots, \Q_N\}$ and $N = 2^{\Omega(\ell(d-\ell) \log(1/\delta))}$, where $\Q_i \in \BR^{\ell \times d}$ with $\Q_i \Q_i^{\top} = \I_\ell$, such that $\norm{\Q_i \Q_j^{\top}} < 1 - \delta$.
\end{lem}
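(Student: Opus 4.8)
The plan is to read the spectral-norm condition geometrically and then build the family by a packing/probabilistic argument on the Grassmannian of $\ell$-dimensional subspaces of $\BR^d$. Since each $\Q_i$ has orthonormal rows, it is an orthonormal basis of an $\ell$-dimensional subspace $\mathcal{U}_i$, and the singular values of $\Q_i\Q_j^\top$ are exactly the cosines of the principal angles $\theta_1\le\cdots\le\theta_\ell$ between $\mathcal{U}_i$ and $\mathcal{U}_j$. Hence $\norm{\Q_i\Q_j^\top}=\cos\theta_1$, and the requirement $\norm{\Q_i\Q_j^\top}<1-\delta$ is equivalent to $\theta_1>\arccos(1-\delta)=:\alpha$, i.e.\ \emph{every} direction of $\mathcal{U}_i$ makes angle more than $\alpha\approx\sqrt{2\delta}$ with $\mathcal{U}_j$. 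So what we must produce is a set of $N$ subspaces that are pairwise ``totally skew'' at angular scale $\alpha$.

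First I would set up the invariant (Haar) probability measure $\mu$ on the Grassmannian $\mathrm{Gr}(\ell,d)$ and reduce the statement to a single measure estimate. Fixing one subspace $\mathcal{U}$, let $B(\mathcal{U})=\{\mathcal{V}:\norm{\Q_{\mathcal{U}}\Q_{\mathcal{V}}^\top}\ge 1-\delta\}$ be the ``bad'' neighborhood of subspaces that come within angle $\alpha$ of $\mathcal{U}$ in some direction. A greedy/volumetric packing then yields $N\ge 1/\mu(B)$, and equivalently the probabilistic method works: draw $\Q_1,\dots,\Q_N$ with i.i.d.\ Haar-random row spaces and union bound over the $\binom{N}{2}$ pairs, which gives a valid family as soon as $\binom{N}{2}\,\mu(B)<1$, i.e.\ $N\approx \mu(B)^{-1/2}$. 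Either way the whole statement is reduced to showing $\mu(B)\le 2^{-\Omega(\ell(d-\ell)\log(1/\delta))}$, equivalently $\mu(B)\le \delta^{\Omega(\ell(d-\ell))}$.

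The crux---and the step I expect to be the main obstacle---is precisely this measure estimate with the correct exponent. By rotational invariance one may take $\mathcal{U}=\mathrm{span}(e_1,\dots,e_\ell)$, so that the squared cosines $\cos^2\theta_a$ are the eigenvalues of the top-left $\ell\times\ell$ block of a Haar-random orthogonal matrix; these follow a real Jacobi (MANOVA) ensemble, and $\mu(B)=\Pr[\cos^2\theta_1\ge(1-\delta)^2]$ is a hard-edge tail of its largest eigenvalue. The delicate point is that a naive one-direction bound (controlling a single principal angle, or union bounding over an $\alpha$-net of the unit sphere of $\mathcal{U}$) only pins down $O(d-\ell)$ of the $\ell(d-\ell)$ available ``degrees of freedom'' and therefore undershoots the target exponent; to reach $\ell(d-\ell)$ one must account for the full joint density of all $\ell$ principal angles near the coincidence locus. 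I would handle this either by integrating the Jacobi joint density over the sliver where $\theta_1\le\alpha$ while tracking the Vandermonde interaction, or---if the purely random model proves lossy---by replacing it with an explicit structured construction (for instance, building each orthonormal frame coordinate-by-coordinate from spherical codes in successive orthogonal complements) and certifying directly that distinct code-tuples keep the smallest principal angle above $\alpha$. Establishing that separation bound for the structured family, or equivalently pinning down the exact exponent in the Jacobi tail, is the technical heart of the argument; the packing/union-bound wrapper around it is routine.
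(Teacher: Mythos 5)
You should know at the outset that the paper contains no proof of this lemma to compare against: it is imported verbatim, with citation, from \cite{kapralov2013differentially}, and is only consumed in Lemma~\ref{lem:exp-distance}. Judged on its own terms, your proposal has the right wrapper (greedy packing / union bound, reduced to a measure estimate for the bad set $B$), and your geometric reading of the condition is correct: $\norm{\Q_i\Q_j^\top}=\cos\theta_1$, so the requirement is that \emph{all} principal angles exceed $\alpha=\arccos(1-\delta)$. But the step you defer as the ``technical heart'' --- showing $\mu(B)\le 2^{-\Omega(\ell(d-\ell)\log(1/\delta))}$, or finding a structured family of that size --- is not merely hard; it is impossible, because the statement as transcribed in this paper is false once $\ell$ exceeds a constant. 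Two concrete obstructions. First, the tension you noticed is actually a lower bound on $\mu(B)$: the bad event $\{\theta_1\le\alpha\}$ \emph{contains} the event that one fixed unit vector $u_0\in\mathcal{U}$ lies within angle $\alpha$ of $\mathcal{V}$, whose probability is $\Pr\big[\mathrm{Beta}(\ell/2,(d-\ell)/2)\ge(1-\delta)^2\big]$, which scales as $\delta^{(d-\ell)/2}$ up to $\delta$-independent factors; no refinement of the Jacobi-tail computation can push $\mu(B)$ below that. Second, and fatally for the structured-construction fallback, \emph{no} family can be as large as claimed: under the stated condition every unit vector of $\mathcal{U}_i$ is at geodesic distance more than $\alpha$ from every unit vector of $\mathcal{U}_j$, so the $\alpha/2$-tubes around the unit spheres $\mathcal{U}_i\cap S^{d-1}$ are pairwise disjoint, and each tube has normalized measure at least $\cos^{\ell-2}(\alpha/2)\sin^{d-\ell}(\alpha/2)$, whence
\begin{align*}
N\;\le\;\frac{1}{\cos^{\ell-2}(\alpha/2)\,\sin^{d-\ell}(\alpha/2)}\;\le\;(1-\delta)^{-\ell}\Bigl(\tfrac{2}{\sqrt{\delta}}\Bigr)^{d-\ell},
\qquad\text{i.e.}\qquad
\log_2 N=O\Bigl(\ell\log\tfrac{1}{1-\delta}+(d-\ell)\log\tfrac{1}{\delta}\Bigr),
\end{align*}
exponentially below $2^{\Omega(\ell(d-\ell)\log(1/\delta))}$ whenever $\ell$ is super-constant (and for $2\ell>d$ the condition is outright vacuous, since two $\ell$-dimensional subspaces of $\BR^d$ must intersect). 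Note this kills exactly the regime the paper needs in Theorem~\ref{thm:lower-bound}, where $\ell=m/4$ grows and $\delta=1/8$ is constant.

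The gap, then, is not in your geometry but in the statement you were handed: the transcription controls the wrong extreme singular value. The packing actually provided by \cite{kapralov2013differentially} separates subspaces in the \emph{largest} principal angle, i.e.\ pairwise $\sigma_\ell(\Q_i\Q_j^\top)\le 1-\delta$ (equivalently a projection-distance separation $\|\Q_i^\top\Q_i-\Q_j^\top\Q_j\|_2\ge\sqrt{2\delta-\delta^2}$), not $\sigma_1(\Q_i\Q_j^\top)=\norm{\Q_i\Q_j^\top}\le 1-\delta$. Under the corrected condition the bad set is an honest metric ball of radius $\Theta(\sqrt{\delta})$ in the $\ell(d-\ell)$-dimensional Grassmannian; standard volume estimates give it Haar measure $(C\sqrt{\delta})^{\ell(d-\ell)}$, and your greedy/volumetric wrapper then closes the proof with no Jacobi-ensemble analysis at all. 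The paper's downstream use also survives this correction with one small patch: in Lemma~\ref{lem:exp-distance}, writing $\M=\hY^\ui\hY^{\uj\top}$, one has $(\hY^\ui-\hY^\uj)(\hY^\ui-\hY^\uj)^\top=2\I_\ell-\M-\M^\top$, and taking the right singular vector $\vv$ of the least singular value gives $\lambda_{\min}(\M+\M^\top)\le 2\vv^\top\M\vv\le 2\sigma_\ell(\M)$, so $\norm{\hY^\ui-\hY^\uj}^2=2-\lambda_{\min}(\M+\M^\top)\ge 2\delta$ still holds. I would recommend you restate the lemma in that corrected form and prove that instead; as literally stated, it cannot be proved.
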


By using Lemma $\ref{lem:exp}$, we can construct a sets contains exponential number of matrices that each pair of them are not ``too close''. The formalized result is shown in Lemma \ref{lem:exp-distance}.
\begin{lem}\label{lem:exp-distance}
    For each $\delta > 0$ and $d_x \le d_y$ there exits a set of matrices  $\widehat\fZ_\ell = \{(\hX^{(1)}, \hY^{(1)}), \cdots, (\hX^{(N)}, \hY^{(N)})\}$, where $N = 2^{\Omega(\ell(d_y-\ell) \log(1/\delta))}$ and $\hX^\ui \in \BR^{\ell \times d_x}, \hY^\ui \in \BR^{\ell \times d_y}$ satisfy $\hX^\ui \hX^{\ui\top} = \I_\ell~\text{and}~\hY^\ui \hY^{\ui\top} = \I_{\ell}$
    for any $i=1,\dots,n$ and 
    \begin{align*}
    \big\|\hX^{\ui\top}\hY^{\ui\top} - \hX^{\uj\top}\hY^{\uj\top}\big\|_2 > \sqrt{2\delta} 
    \end{align*}
    for any $j\neq i$.
\end{lem}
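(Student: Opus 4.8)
The plan is to reduce the construction to Lemma~\ref{lem:exp} applied in the $d_y$-dimensional space while letting the $\hX$-part be a single fixed row-orthonormal matrix. Concretely, I would invoke Lemma~\ref{lem:exp} with $d=d_y$ to obtain a family $\fQ=\{\Q_1,\dots,\Q_N\}$ with $\Q_i\in\BR^{\ell\times d_y}$, $\Q_i\Q_i^\top=\I_\ell$, $\norm{\Q_i\Q_j^\top}<1-\delta$ for $i\neq j$, and $N=2^{\Omega(\ell(d_y-\ell)\log(1/\delta))}$. Then I would set $\hY^\ui=\Q_i$ and take $\hX^\ui=\hX_0:=[\I_\ell,\vzero_{\ell\times(d_x-\ell)}]$ for every $i$ (this is legitimate because $d_x\le d_y$ and the matrices must have orthonormal rows, which forces $\ell\le d_x$). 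By construction $\hX_0\hX_0^\top=\I_\ell$ and $\hY^\ui\hY^{\ui\top}=\I_\ell$, and the cardinality is exactly the claimed $N$, so it remains only to verify the separation.

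For the separation I would first observe that the relevant product is $\hX^{\ui\top}\hY^\ui=\hX_0^\top\Q_i\in\BR^{d_x\times d_y}$, whence $\norm{\hX^{\ui\top}\hY^\ui-\hX^{\uj\top}\hY^\uj}=\norm{\hX_0^\top(\Q_i-\Q_j)}$. Since $\hX_0\hX_0^\top=\I_\ell$, the matrix $\hX_0^\top$ has orthonormal columns and hence acts as a linear isometry $\BR^\ell\to\BR^{d_x}$; multiplying on the left therefore preserves the spectral norm, giving $\norm{\hX_0^\top(\Q_i-\Q_j)}=\norm{\Q_i-\Q_j}$. This step removes the $\hX$-part entirely and turns the claim into a lower bound on $\norm{\Q_i-\Q_j}$ expressed purely through the near-orthogonality guarantee of Lemma~\ref{lem:exp}.

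The technical heart, which I expect to be the only nontrivial step, is to convert $\norm{\Q_i\Q_j^\top}<1-\delta$ into $\norm{\Q_i-\Q_j}>\sqrt{2\delta}$. I would expand $(\Q_i-\Q_j)(\Q_i-\Q_j)^\top=2\I_\ell-\big(\Q_i\Q_j^\top+\Q_j\Q_i^\top\big)$, so that with $S:=\Q_i\Q_j^\top+\Q_j\Q_i^\top$ one has $\norm{\Q_i-\Q_j}^2=\lambda_{\max}(2\I_\ell-S)=2-\lambda_{\min}(S)$. Because $S$ is symmetric with $\norm{S}\le 2\norm{\Q_i\Q_j^\top}<2(1-\delta)$, every eigenvalue of $S$ lies strictly below $2(1-\delta)$, in particular $\lambda_{\min}(S)<2(1-\delta)$. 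Substituting yields $\norm{\Q_i-\Q_j}^2>2\delta$, and hence $\norm{\hX^{\ui\top}\hY^\ui-\hX^{\uj\top}\hY^\uj}>\sqrt{2\delta}$, as required. The only place demanding care is the eigenvalue bookkeeping, namely passing correctly from the spectral bound on the off-diagonal block $\Q_i\Q_j^\top$ to a bound on $\lambda_{\min}$ of its symmetrization $S$; reversing an inequality direction here would quietly invalidate the estimate.
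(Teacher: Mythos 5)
Your proof is correct and takes essentially the same route as the paper: apply Lemma~\ref{lem:exp} with $d=d_y$ to build the $\hY^\ui$'s, fix $\hX^\ui=[\I_\ell,\vzero_{\ell\times(d_x-\ell)}]$ so that left-multiplication by $\hX^{\ui\top}$ is an isometry, and expand $(\Q_i-\Q_j)(\Q_i-\Q_j)^\top=2\I_\ell-S$ to get the separation. The only cosmetic difference is that you use the exact identity $\norm{\Q_i-\Q_j}^2=2-\lambda_{\min}(S)$ together with $\lambda_{\min}(S)\le\norm{S}<2(1-\delta)$, whereas the paper bounds $\norm{2\I_\ell-S}\ge 2-\norm{S}$ via the reverse triangle inequality; both give the same strict bound $2\delta$.
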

\begin{proof}
Based on Lemma \ref{lem:exp}, there exist a set of matrices
$\fY = \{\hY^{(1)}, \cdots, \hY^{(N)}\}$, 
where $N = 2^{\Omega(\ell(d-\ell) \log(1/\delta))}$; and $\hY^\ui \in \BR^{\ell \times d}$ satisfies $\hY^\ui\hY^{\ui\top}=\I_\ell$ and $\big\|\hY^\ui \hY^{\uj\top}\big\| < 1 - \delta$.
We further set $\hX^\ui = [\I_\ell, {\bf 0}_{\ell\times(d_x-\ell)}]$.
We have
{\begin{align*}
 & \big\|\hX^{\ui\top}\hY^\ui - \hX^{\uj\top}\hY^\uj\big\|_2^2 
=  \big\|\hY^\ui - \hY^\uj\big\|_2^2 \\
= & \big\|(\hY^\ui - \hY^\uj)(\hY^{\ui\top} - \hY^{\uj\top})\big\|_2 \\
\geq & 2 - \big\|\hY^\uj \hY^{\ui\top} + \hY^\ui \hY^{\uj\top}\big\|_2 
\geq  2 \delta,
\end{align*}}
where we use the definition of $\hX^\ui$, $\hY^\ui$ and the fact $\norm{\A^\top\A}=\norm{\A}^2$.
\end{proof}

Then we present a lower bound of space complexity for approximate matrix multiplication, which matches the memory cost of COD. Hence, we can conclude that COD is space optimal with respect to the guaranteed accuracy in Theorem~\ref{thm:dense-error}.
\begin{thm}\label{thm:lower-bound}
    We consider any matrix sketching algorithm with inputs as $\X\in\BR^{n\times d_x}$ and $\Y\in\BR^{n\times d_y}$ and outputs $\A\in\BR^{m\times d_x}$ and $\B\in\BR^{m\times d_y}$ with guarantee
    {\begin{align*}
        \norm{\X^\top \Y-\A^\top \B} \leq \frac{1}{m-k}\Big(\Norm{\X}_F\Norm{\Y}_F-\big\|\X^\top\Y\big\|_k\Big)
    \end{align*}}
    for any $k<m$. Assuming that a constant number of bits is required to describe a word (i.e., a unit of memory), then the algorithm requires at least $\Omega(m(d_x+d_y))$ bits of space.
\end{thm}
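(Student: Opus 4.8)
The plan is to prove this space lower bound by a standard encoding (counting) argument built on the well-separated family of hard instances supplied by Lemma~\ref{lem:exp-distance}. Concretely, I would instantiate $\widehat\fZ_\ell$ at the same $\ell$ appearing in the guarantee, feed each pair $(\hX^\ui,\hY^\ui)$ (an input with $n=\ell$ rows) to the sketching algorithm, and argue that the accuracy guarantee forces a \emph{distinct} output on each of the $N$ instances. Since the output $(\A,\B)$ of a deterministic algorithm is a function of its final memory contents, an algorithm using $S$ bits can realize at most $2^S$ distinct outputs; hence $2^S\ge N$ and $S\ge\log_2 N$. Choosing $\ell,k,\delta$ appropriately will then make $\log_2 N=\Omega(m(d_x+d_y))$.

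First I would evaluate the right-hand side of the assumed bound on these instances. The orthonormality conditions $\hX^\ui\hX^{\ui\top}=\I_\ell$ and $\hY^\ui\hY^{\ui\top}=\I_\ell$ give $\Norm{\hX^\ui}_F=\Norm{\hY^\ui}_F=\sqrt{\ell}$, so $\Norm{\X}_F\Norm{\Y}_F=\ell$. Using the explicit construction $\hX^\ui=[\I_\ell,{\bf 0}_{\ell\times(d_x-\ell)}]$, the product $\hX^{\ui\top}\hY^\ui$ is just $\hY^\ui$ padded with zero rows, so it has exactly $\ell$ singular values, all equal to $1$; therefore $\Norm{\X^\top\Y}_k=k$ for every $k\le\ell$. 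Substituting these values into the hypothesized guarantee shows that every instance is approximated with error at most $\eps:=\frac{\ell-k}{m-k}$.

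Next comes the separation step. If two distinct instances $i\neq j$ produced the same output $(\A,\B)$, then applying the guarantee to both together with the triangle inequality would give $\norm{\hX^{\ui\top}\hY^\ui-\hX^{\uj\top}\hY^\uj}\le 2\eps$, contradicting the separation $\norm{\hX^{\ui\top}\hY^\ui-\hX^{\uj\top}\hY^\uj}>\sqrt{2\delta}$ of Lemma~\ref{lem:exp-distance} as soon as $2\eps<\sqrt{2\delta}$. Hence all $N$ outputs are distinct. It then remains to realize this inequality while keeping $N$ large: take $\delta$ to be an absolute constant (say $\delta=1/4$), set $\ell=\Theta(m)$ with $\ell\le d_x$ (e.g. $\ell=\lfloor m/2\rfloor$), and — exploiting that the guarantee holds for \emph{all} $k<\ell$ — choose $k=\ell-1$, so that $\eps=\frac{1}{m-\ell+1}=O(1/m)$ drops below $\tfrac12\sqrt{2\delta}$ for all $m$ above a fixed constant. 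With these choices $\log_2 N=\Omega(\ell(d_y-\ell)\log(1/\delta))=\Omega(m\,d_y)$, and since $d_x\le d_y$ we have $m\,d_y\ge\tfrac12 m(d_x+d_y)$, yielding $S\ge\log_2 N=\Omega(m(d_x+d_y))$ bits.

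The main obstacle is the parameter balancing rather than any single estimate. The error constraint $2\eps<\sqrt{2\delta}$ pins $\ell$ to $\Theta(m)$ with $\ell$ strictly below $m$: taking $\ell=m$ forces $\frac{\ell-k}{m-k}=1$ for every admissible $k$, which is far too large to beat the separation, whereas the count $\log_2 N$ grows only like $\ell(d_y-\ell)$, so one must check that a single choice of $\ell=\Theta(m)$ is simultaneously small enough to keep $\eps$ below the separation and large enough to reach the target $m(d_x+d_y)$. Care is also needed in the two supporting facts — that the hard instances have all singular values equal to one (so the Ky~Fan term is exactly $k$) and that the encoding step is legitimate, i.e. the output is a deterministic function of the finite, $S$-bit memory, which is precisely what licenses $2^S\ge N$.
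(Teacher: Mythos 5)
Your proposal is correct and follows essentially the same route as the paper's proof: both instantiate the packing family of Lemma~\ref{lem:exp-distance}, evaluate the guarantee on these instances (using $\Norm{\X}_F\Norm{\Y}_F=\ell$ and $\Norm{\X^\top\Y}_k=k$), apply the triangle inequality to force distinct outputs, and conclude $S\ge\log_2 N=\Omega(m d_y)=\Omega(m(d_x+d_y))$. The only differences are cosmetic parameter choices (the paper takes $\ell=m/4$, $\delta=1/8$ and bounds the error by $1/4$ against a separation of $1/2$, and pads the instances with zero rows to handle general $n$, whereas you take $\ell=\lfloor m/2\rfloor$, $\delta=1/4$, $k=\ell-1$), none of which changes the substance of the argument.
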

\begin{proof}
    Without loss of generality, we suppose that $d_y\geq d_x$. 
    Let $\widehat\fZ_\ell = \{(\hX^{(1)}, \hY^{(1)}), \cdots, (\hX^{(N)}, \hY^{(N)})\}$ be the set of matrices defined in Lemma~\ref{lem:exp-distance} with $\ell=m/4$, $\delta=1/8$ and $N=2^{\Omega(\frac{m}{4}\cdot(d_y-m/4)\log(8))}$.
    We construct matrices 
    $\X^\ui=[\hX^\ui; {\bf 0}_{(n-m/4)\times d_x}]\in\BR^{n\times d_x}$ 
    and
    $\Y^\ui=[\hY^\ui; {\bf 0}_{(n-m/4)\times d_y}]\in\BR^{n\times d_y}$
    for $i=1,\dots,N$.  
    Then we have $\fZ_\ell=\{(\X^\ui,\Y^\ui)\}_{i=1}^N$ which satisfies
    $\norm{\X^{\ui\top} \Y^\ui - \X^{\uj\top} \Y^\uj} > 1/2$
    for each $i\neq j$.
    Let $(\A,\B)$ be the output of the matrix sketching algorithm with input $(\X^\ui,\Y^\ui)$.
    The guarantee of the algorithm indicates
    \begin{align*}
        \big\|\X^{\ui\top}\Y^\ui-\A^\top \B\big\|_2 \leq \frac{1}{m-k}\left(\frac{m}{4}-k\right) \leq \frac{1}{4}.
    \end{align*}
    Hence, each $(\A,\B)$ only encodes one matrix pencil in $\fZ_\ell$ (the product of the matrices), which means that the lower bound of space complexity to attach the desired accuracy is 
    $\log_2N=\Omega(md_y)=\Omega(m(d_x+d_y))$ bits.
\end{proof}

\section{Sparse Co-Occurring Directions}\label{section:SCOD}

In this section, we proposed a variant of COD for sparse AMM. We also prove its error bound is similar to our improved result of COD.

\subsection{The Algorithm}

We describe details of our sparse co-occurring directions (SCOD) in Algorithm \ref{alg:SCOD}. 
The procedure of SCOD maintains the sparse data in two buffer matrices $\X'$ and $\Y'$. 
The algorithm restricts the non-zero entries in buffers to be less than $m(d_x+d_y)$ and the number of row of each buffer is at most $d_x+d_y$. When the buffers are full, we perform subspace power method (SPM)~\cite{woodruff14sketching,musco2015randomized}  to approximate the data in the buffers by low-rank matrices $\tX\in\BR^{m \times d_x}$ and $\tY\in\BR^{m \times d_y}$ such that $\X'^\top\Y'\approx\tX^\top\tY$. We present the procedure of SPM in Algorithm~\ref{alg:SPM}. 

Let $\tX^\ui$ and $\tY^\ui$ be the results of $\tX$ and $\tY$ after Algorithm~\ref{alg:SCOD} has executed ``then'' section for $i$-times. Define 
$\C=[\tX^{(1)};\cdots;\tX^{(T)}]$ and
$\D=[\tY^{(1)};\cdots;\tY^{(T)}]$
where $T$ is the number of total times we enter ``then'' section of the algorithm. Then $\C$ and $\D$ are the estimators of $\X$ and $\Y$ respectively and the procedure of SCOD can be regarded as as running standard COD on input matrices $\C$ and $\D$ in streaming fashion.
Since the row numbers of buffers $\X'$ and $\Y'$ could be much larger than $m$, the operations on dense matrices (line \ref{line:start-dense}-\ref{line:spd2}) will not be executed frequently. Hence, SCOD is much more efficient than COD for sparse inputs. 

\begin{algorithm}[ht] 
	\caption{Subspace Power Method (SPM)}\label{alg:SPM}
	\begin{algorithmic}[1]
	    \STATE \textbf{Input:} $\M\in\BR^{d_1\times d_2}$, target rank $m$ and integer $q>0$ \\[0.1cm]
	    \STATE $\G=[G_{ij}]\in\BR^{d_2\times m}$, where $G_{ij}\sim\fN(0,1)$ i.i.d \\[0.1cm]
	    \STATE\label{line:spm-multiply} $\K=\left(\M\M^\top\right)^q\M\G\in\BR^{d_1\times m}$ \\[0.1cm]
	    \STATE $\Z\gets$ orthonormal column basis of $\K$ \\[0.1cm]
		\STATE \textbf{Output:} $\Z$ \\[0.1cm]
	\end{algorithmic}
\end{algorithm}

\begin{algorithm}[ht] 
	\caption{Sparse Co-Occurring Directions (SCOD)}\label{alg:SCOD}
	\begin{algorithmic}[1]
	    \STATE \textbf{Input:} $\X\in\BR^{n\times d_x}$, $\Y\in\BR^{n\times d_y}$, sketch size $m$, 
	    failure probability $\delta$ and sequence $\{q_i\}_{i=1,2\dots}$ \\[0.1cm]
	    \STATE $i=0$  \\[0.1cm]
	    \STATE $\A\gets\vzero_{m\times d_x}$,~  $\B\gets\vzero_{m\times d_y}$ \\[0.1cm]
	    \STATE $\X'\gets{\rm empty}$,~$\Y'\gets{\rm empty}$ \\[0.1cm]
		\STATE \textbf{for} $t=1,2,\dots,n$ \\[0.1cm]
		\STATE\quad $\X'\gets [\X';(\vx^\ut)^\top]$,~$\Y'\gets [\Y';(\vy^\ut)^\top]$ \\[0.1cm]
		\STATE\label{line:if}\quad \textbf{if } $\nnz(\X')+\nnz(\Y')>m(d_x+d_y)$ \textbf{or} $t=n$ \\[0.1cm] \quad\quad \textbf{or} ${\rm rows}(\X')=d_x+d_y$ \textbf{or} ${\rm rows}(\Y')=d_x+d_y$ \\[0.1cm]
		\quad\textbf{then} \\[0.1cm]
		\STATE\quad\quad\quad\label{line:SPM} $\Z=\SPM(\X'^\top\Y',m,q_i)$ \\[0.1cm]
		\STATE\quad\quad\quad\label{line:balance} $[\tU,\tSigma,\tV]=\SVD\left(\Z^\top\X'^\top\Y'\right)$ \\[0.1cm]
		\STATE\label{line:XY}\quad\quad\quad $\tX\gets\tSigma^{1/2}\tU^\top\Z^\top$  \\[0.1cm]
		\STATE\label{line:XY2}\quad\quad\quad $\tY\gets\tSigma^{1/2}\tV^\top$ \\[0.1cm]
		\STATE\label{line:spd1}\quad\quad\quad $\A \gets [\A; \tX]$ \\[0.1cm]
		\STATE\quad\quad\quad $\B \gets [\B; \tY]$ \\[0.1cm]
        \STATE\label{line:start-dense}\quad\quad\quad $(\Q_x,\R_x)\gets \QR\left(\A^\top\right)$\\[0.1cm]
		\STATE\quad\quad\quad $(\Q_y,\R_y)\gets \QR\left(\B^\top\right)$\\[0.1cm]
		\STATE\quad\quad\quad $[\U, \mSigma, \V]\gets \SVD(\R_x\R_y^\top)$ \\[0.1cm]
		\STATE\quad\quad\quad $\delta \gets \sigma_m(\R_x\R_y^\top)$ \\[0.1cm]
        \STATE\quad\quad\quad $\hSigma \gets \max\left(\mSigma-\delta\I_{m},\vzero_{m\times m}\right)$ \\[0.1cm]
        \STATE\quad\quad\quad $\A\gets\hSigma^{1/2}\U^\top\Q_x^\top$ \\[0.1cm]
		\STATE\label{line:spd2}\quad\quad\quad $\B\gets\hSigma^{1/2}\V^\top\Q_y^\top$ \\[0.1cm]
	    \STATE\quad\quad\quad $\X'\gets{\rm empty}$,~~$\Y'\gets{\rm empty}$ \\[0.1cm]
	    \STATE\quad\quad\quad $i\gets i+1$ \\[0.1cm]
        \STATE\quad \textbf{end if} \\[0.1cm]
		\STATE \textbf{end for} \\[0.1cm]
		\STATE \textbf{Output:} $\A$ and $\B$ \\[0.1cm]
	\end{algorithmic}
\end{algorithm}

\subsection{Analysis of Error Bound}

The analysis of SCOD is more challenging than sparse frequent directions (SFD)~\cite{ghashami2016efficient} which only addresses the case of $\X=\Y$. The reason is the ``mergeability property'' of FD~\cite{ghashami2016frequent,desai2016improved,ghashami2016efficient} only works for Frobenius norm and it is not applicable to COD.

The approximation error of SCOD comes from two parts: the compressing error from sub-routine SPM and the merge error from estimating $\C^\top\D$ by $\A^\top\B$. 
We first consider a single call of SPM, which approximation error can be bounded as follows.

\begin{lem}\label{lem:randSVD}
Let $q=\tilde\Theta(\log(md_1/p)/\eps)$ for Algorithm \ref{alg:SPM}, then the output $\Z$ satisfies
$\norm{\M - \Z\Z^\top\M} \leq (1+\eps)\sigma_{m+1}(\M)$
with probability at least $1-p$.
\end{lem}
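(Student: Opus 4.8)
The plan is to reduce the statement to the standard analysis of randomized subspace iteration and then carry out the gap-free balancing argument. First I would write the SVD $\M=\U\mSigma\V^\top$ and observe that
\[
\K=(\M\M^\top)^q\M\G=\U\mSigma^{2q+1}\V^\top\G,
\]
so the column space of $\K$, and hence of $\Z$, coincides with that of $\U\mSigma^{2q+1}(\V^\top\G)$. Running SPM with $q$ iterations is therefore exactly subspace iteration applied to a matrix whose singular values are the $(2q+1)$-th powers of those of $\M$, which is what amplifies the separation between the leading directions and the tail. The quantity to control, $\norm{(\I-\Z\Z^\top)\M}$, is the spectral norm of the part of $\M$ lying in the orthogonal complement of the computed subspace.

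Next I would split the spectrum at the target rank: write $\U=[\U_1,\U_2]$, $\V=[\V_1,\V_2]$ and $\mSigma=\diag(\mSigma_1,\mSigma_2)$, where $\mSigma_1$ collects the top $m$ singular values, and set $\G_1=\V_1^\top\G$ and $\G_2=\V_2^\top\G$, so that $\G_1$ is a square Gaussian matrix. Conditioned on $\G_1$ being invertible (which holds almost surely), a deterministic structural bound of Halko--Martinsson--Tropp / Musco--Musco type controls $\norm{(\I-\Z\Z^\top)\M}$ by a quantity built from $\mSigma_2$, the ratio of powers $\mSigma_2^{2q+1}(\mSigma_1^{2q+1})^{-1}$, and the Gaussian factors $\norm{\G_1^{-1}}$ and $\norm{\G_2}$. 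The powers $2q+1$ enter only through the tail, so taking $q$ large suppresses every direction whose singular value sits appreciably below $\sigma_m$.

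The key step, and the main obstacle, is to obtain the clean $(1+\eps)\sigma_{m+1}$ bound \emph{without} assuming a gap between $\sigma_m$ and $\sigma_{m+1}$. Following Musco--Musco, I would split the singular values not rigidly at index $m$ but at the threshold $(1+\eps)\sigma_{m+1}$: the singular directions above this threshold are separated from the tail by a multiplicative $(1+\eps)$ factor, so the amplification by the power $2q+1$ lets $q=\Theta(\log(\cdot)/\eps)$ iterations capture them up to a negligible residual, while the directions at or below the threshold contribute at most $(1+\eps)\sigma_{m+1}$ to the spectral norm. Balancing these two contributions is exactly what forces the $1/\eps$ (rather than $1/\sqrt{\eps}$) dependence in $q$, and it is the delicate part of the argument.

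Finally I would discharge the probabilistic terms. Standard Gaussian random-matrix estimates give $\norm{\G_2}=O(\sqrt{d_2}+\sqrt{m})$ with high probability and a lower tail bound on the smallest singular value of the square Gaussian matrix $\G_1$, so that $\norm{\G_1^{-1}}$ is polynomially bounded except with small probability. Choosing constants so that the total failure probability is at most $p$ introduces the $\log(md_1/p)$ factor, which combined with the $1/\eps$ from the balancing step yields $q=\tilde\Theta(\log(md_1/p)/\eps)$ and completes the proof. Since the statement is a known result cited to Woodruff and to Musco--Musco, the most economical exposition may simply invoke their theorem after the SVD reduction, but the self-contained route above makes the role of $q$ transparent.
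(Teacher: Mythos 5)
Your plan is correct and would prove the lemma, but it is a genuinely different route from the one the paper takes, and it is worth seeing where they diverge. The paper follows Woodruff's argument and never performs any case analysis on the spectrum: writing $\N=(\M\M^\top)^q\M$, it first invokes the projection inequality (Woodruff's Lemma 4.15) to get
\begin{align*}
\norm{\M-\Z\Z^\top\M}\;\leq\;\norm{\N-\Z\Z^\top\N}^{1/(2q+1)},
\end{align*}
then applies the HMT-type structural bound to the \emph{powered} matrix,
$\norm{\N-\Z\Z^\top\N}^2\leq\norm{\N-\N_m}^2\bigl(1+\norm{\mOmega_L}^2\norm{\mOmega_U^\dagger}^2\bigr)$
with $\mOmega_U=\V_m^\top\G$ square Gaussian and $\mOmega_L$ the complementary block, and finally uses $\norm{\N-\N_m}=\sigma_{m+1}(\M)^{2q+1}$. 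After taking the $(2q+1)$-th root, the entire Gaussian ``junk'' factor appears raised to the power $1/(4q+2)$, so choosing $q=\Theta(\eps^{-1}\log(\cdot))$ forces it below $1+\eps$ regardless of any spectral gap. In other words, the step you single out as the delicate heart of your argument---splitting the spectrum at the threshold $(1+\eps)\sigma_{m+1}$ and balancing the captured and uncaptured directions---is precisely what the root-taking trick renders unnecessary; the gap-free property is automatic. Your Musco--Musco route buys generality (it is the analysis that extends to block Krylov with the $1/\sqrt{\eps}$ rate and to per-vector guarantees), while the paper's route is shorter and avoids the threshold bookkeeping entirely. One caution if you follow your closing suggestion of citing Musco--Musco as a black box: their theorem is stated for constant or $1/\mathrm{poly}(d)$ failure probability, whereas the lemma requires an arbitrary $p$, so you would still have to redo the probabilistic step with explicit Gaussian estimates---an upper bound on $\norm{\G_2}$ and a lower bound $\sigma_{\min}(\G_1)\gtrsim p/\sqrt{m}$ holding with probability $1-p$---which is exactly where the $\log(md_1/p)$ inside $q$ originates in the paper's proof (via the Rudelson--Vershynin bounds of its Lemma~\ref{lem:largest-sv}).
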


Based on Lemma \ref{lem:randSVD}, we can bound the total compressing error of SOCD by the following lemma.

\begin{lem}\label{lem:compress}
Setting $q_i=\tilde\Theta(\log(md_1/p_i)/\eps)$ and $p_i=\delta/2i^2$, then we have then Algorithm \ref{alg:SCOD} holds that
\begin{align*}
\norm{\X^\top\Y-\C^\top\D} 
\leq  \dfrac{1+\eps}{m-k}\left(\Norm{\X}_F\Norm{\Y}_F - \big\|\X^\top\Y\big\|_k\right),
\end{align*}
for any $k<m$ and $\eps>0$ with probability $1-\delta$. 
\end{lem}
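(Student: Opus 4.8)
The plan is to reuse the three-step architecture of the dense analysis (the two accounting inequalities of Lemmas~\ref{lem:bound:sigma} and~\ref{lem:bound:nuclear}, combined as in the proof of Theorem~\ref{thm:dense-error}), upgrading every estimate to an $(1+\eps)$-approximate version so that the single $(1+\eps)$ loss of the SPM guarantee (Lemma~\ref{lem:randSVD}) is the only price paid. First I would dispose of the randomness. The $j$-th call of SPM is run with $q_j=\tilde\Theta(\log(md_1/p_j)/\eps)$ and $p_j=\delta/(2j^2)$, so by Lemma~\ref{lem:randSVD} its accuracy guarantee fails with probability at most $p_j$. A union bound over all flushes bounds the total failure probability by $\sum_{j\geq 1}\delta/(2j^2)=\pi^2\delta/12<\delta$, and for the remainder of the argument I condition on the event that every SPM call meets its guarantee.

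Because the buffers partition the rows of the stream, I can write $\X^\top\Y=\sum_{j=1}^T(\X'_j)^\top\Y'_j$ and $\C^\top\D=\sum_{j=1}^T\tX^{(j)\top}\tY^{(j)}$, where $(\X'_j,\Y'_j)$ is the content of the buffer at its $j$-th flush and $(\tX^{(j)},\tY^{(j)})$ its compression. Let $\delta_j$ denote the shrinkage applied during the $j$-th compression and set $\Delta=\sum_{j=1}^T\delta_j$. On the good event I would establish three inequalities, each the SCOD counterpart of a dense estimate: (i) the spectral bound $\norm{\X^\top\Y-\C^\top\D}\leq(1+\eps)\Delta$, obtained from the triangle inequality applied to the buffer decomposition together with the per-buffer SPM slack; (ii) the energy bound $\Norm{\C^\top\D}_*\leq\Norm{\X}_F\Norm{\Y}_F-m\Delta$; and (iii) the nuclear bound $\Norm{\X^\top\Y}_*-\Norm{\C^\top\D}_*\leq\sum_{i=k+1}^d\sigma_i(\X^\top\Y)+k\Delta$, the SCOD analogue of the key Lemma~\ref{lem:bound:nuclear}.

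With these in hand the conclusion follows verbatim from the proof of Theorem~\ref{thm:dense-error}: adding (ii) and (iii) and cancelling $\Norm{\X^\top\Y}_*$ yields $(m-k)\Delta\leq\Norm{\X}_F\Norm{\Y}_F-\Norm{\X^\top\Y}_k$, hence $\Delta\leq\frac{1}{m-k}\big(\Norm{\X}_F\Norm{\Y}_F-\Norm{\X^\top\Y}_k\big)$, and substituting this into (i) gives exactly the claimed bound with the prefactor $(1+\eps)/(m-k)$.

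The crucial difficulty is establishing (ii) and (iii). As the text stresses, COD is \emph{not} mergeable, so neither bound may be obtained by summing per-buffer nuclear norms. Instead (ii) rests on subadditivity of the nuclear norm, $\Norm{\C^\top\D}_*\leq\sum_j\Norm{\tX^{(j)\top}\tY^{(j)}}_*$, a single-shrinkage energy drop $\Norm{\tX^{(j)\top}\tY^{(j)}}_*\leq\Norm{\X'_j}_F\Norm{\Y'_j}_F-m\delta_j$ for each buffer, and a Cauchy--Schwarz step $\sum_j\Norm{\X'_j}_F\Norm{\Y'_j}_F\leq\Norm{\X}_F\Norm{\Y}_F$ (valid since $\sum_j\Norm{\X'_j}_F^2=\Norm{\X}_F^2$) to reassemble the global Frobenius product. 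The delicate point is that $\delta_j$ must be read off from the \emph{SPM-projected} interaction matrix, so that its energy drop is at least $m\delta_j$ while the spectral error it injects is at most $(1+\eps)\delta_j$; propagating Lemma~\ref{lem:randSVD} so that the $(1+\eps)$ enters only the spectral bound (i) and leaves the accounting (ii)--(iii) clean, and re-deriving (iii) in this approximate, non-additive regime, is where most of the work lies.
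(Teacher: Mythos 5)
Your buffer decomposition and union bound are exactly right (and match the paper), but the central device of your plan --- a ``shrinkage $\delta_j$ applied during the $j$-th compression'' with the accounting inequalities (i)--(iii) --- refers to something that does not exist in the algorithm, and the inequalities you need cannot all hold. In Algorithm~\ref{alg:SCOD}, lines~\ref{line:SPM}--\ref{line:XY2} produce $\tX^{\ui\top}\tY^\ui=\Z\Z^\top\X'^{\ui\top}\Y'^\ui$, i.e.\ a rank-$m$ \emph{projection} of the buffer product; no singular value is shrunk at this stage. (The COD shrinkage with $\delta$ and $\hSigma$ happens afterwards and produces $\A,\B$; its error $\norm{\C^\top\D-\A^\top\B}$ is charged separately, in Theorem~\ref{thm:sparse-error}, not in this lemma.) Consequently your (i) and (ii) are mutually inconsistent for \emph{every} choice of $\delta_j$: for (i) you need $(1+\eps)\delta_j$ to dominate the spectral error of the $j$-th compression, which is at least $\sigma_{m+1}(\X'^{\uj\top}\Y'^\uj)$ for any rank-$m$ approximation, so $\delta_j\gtrsim\sigma_{m+1}$; for (ii) you need the compression to remove at least $m\delta_j$ of nuclear norm, but a projection only removes the tail $\sum_{i>m}\sigma_i$. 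Concretely, take $T=1$ and a buffer whose product has rank exactly $m+1$ with all nonzero singular values equal to $s$, and suppose SPM returns the exact top-$m$ subspace: the spectral error is $s$, forcing $\delta_1\geq s/(1+\eps)$, while the nuclear norm drops by only $s$, so (ii) would require $ms/(1+\eps)\leq s$, false for $m\geq 2$. The property ``energy drop $m\delta$ per unit $\delta$ of spectral error'' is precisely what COD's uniform shrinkage provides and what truncation does not; this is why the ``delicate point'' you flag is not delicate but impossible.

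The lemma in fact needs no global $\Delta$-accounting at all; the paper's proof is a direct per-buffer estimate. Conditioned on all SPM calls succeeding (your union bound), Lemma~\ref{lem:randSVD} gives $\norm{\X'^{\ui\top}\Y'^\ui-\tX^{\ui\top}\tY^\ui}\leq(1+\eps)\sigma_{m+1}\big(\X'^{\ui\top}\Y'^\ui\big)$ for each $i$. Since singular values are nonincreasing, any matrix $\M$ satisfies $(m-k)\,\sigma_{m+1}(\M)\leq\sum_{j=k+1}^{m+1}\sigma_j(\M)\leq\Norm{\M}_*-\Norm{\M}_k$, and Srebro's lemma gives $\Norm{\X'^{\ui\top}\Y'^\ui}_*\leq\Norm{\X'^\ui}_F\Norm{\Y'^\ui}_F$; hence each buffer's compression error is at most $\frac{1+\eps}{m-k}\big(\Norm{\X'^\ui}_F\Norm{\Y'^\ui}_F-\Norm{\X'^{\ui\top}\Y'^\ui}_k\big)$. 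Summing over buffers with the triangle inequality and reassembling via Cauchy--Schwarz, $\sum_i\Norm{\X'^\ui}_F\Norm{\Y'^\ui}_F\leq\Norm{\X}_F\Norm{\Y}_F$, together with subadditivity of the Ky Fan $k$-norm, $\Norm{\X^\top\Y}_k\leq\sum_i\Norm{\X'^{\ui\top}\Y'^\ui}_k$, yields exactly the claimed bound, with the single factor $(1+\eps)$ entering once per buffer. The $m\delta$-style bookkeeping you imported from Theorem~\ref{thm:dense-error} belongs to the merge step, where it is legitimately available because there the algorithm really does shrink.
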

\begin{proof}
Let $\X'^\ui$ and $\Y'^\ui$ be the value of $\X'$ and $\Y'$ when we execute subspace power methods at $i$-th time in line \ref{line:SPM} of Algorithm~\ref{alg:SCOD}, then we have 
\begin{align*}
\X=[\X'^{(1)};\dots;\X'^{(T)}] \text{~~and~~} \Y=[\Y'^{(1)};\dots;\Y'^{(T)}].
\end{align*}
Using Lemma \ref{lem:randSVD} with $\M=\tX^{\ui\top}\tY^\ui$ and $q=q_i$, then with probability $1-p_i$, we have 
{\small\begin{align}
     & \norm{\tX^{\ui\top}\tY^\ui - \X'^{\ui\top}\Y'^\ui}
\leq  (1+\eps)\sigma_{m+1}\left(\X'^{\ui\top}\Y'^\ui\right) \nonumber\\
\leq & \frac{1+\eps}{m-k}\left(\big\|\X'^\ui\big\|_F\big\|\Y'^\ui\big\|_F-\big\|\X'^{\ui\top}\Y'^\ui\big\|_k\right), \label{ieq:sp-compress}
\end{align}}
where the last step use~\citeauthor{srebro2005maximum}'s (\citeyear{srebro2005maximum}) Lemma 1 such that
$\|\X'^{\ui\top}\Y'^{\ui}\|_*
\leq \|\X'^{\ui}\|_F\|\Y'^{\ui}\|_F$.

Summing over inequality (\ref{ieq:sp-compress}) with $i=1,\dots,T$, we have
{\small\begin{align*}
 & \big\|\X^\top\Y-\C^\top\D\big\|_2  
\leq  \sum_{t=1}^T\big\|\X'^\ui\Y'^\ui - \tX^{\ui\top}\tY^\ui\big\|_2  \\
\leq & \frac{1+\eps}{m-k}\sum_{t=1}^T\left(\big\|\X'^\ui\big\|_F\big\|\Y'^\ui\big\|_F-\big\|\X'^{\ui\top}\Y'^\ui\big\|_k\right) \\
\leq & \frac{1+\eps}{m-k}\left(\big\|\X\big\|_F\big\|\Y\big\|_F-\big\|\X^\top\Y\big\|_k\right)
\end{align*}}
with probability $1-\delta$. The last inequality is based on the Cauchy–Schwarz inequality and the triangle inequality of Ky Fan $k$-norm. Note that the failure probability is no more than $p_1+\dots+p_T=\frac{\delta}{2}\sum_{i=1}^T1/i^2\leq\delta$.
\end{proof}
Unlike  SFD~\cite{ghashami2016frequent} which introduces a verifying step to boost the success probability, our method instead requires $q_i$ to be increased  logarithmically to ensure the error bound of SCOD holds with probability at least $1-\delta$ for given $\delta\in(0,1)$.
Another important property of SCOD is that the compression step shrink the magnitude of the product of input matrices. The steps in line~\ref{line:XY}-\ref{line:XY2} of Algorithm~\ref{alg:SCOD}  balance the singular values of $\tX$ and $\tY$, which leads to the following lemma:
\begin{lem}\label{lem:fro-shrink}
Algorithm \ref{alg:SCOD} holds that
\begin{align*}
\big\|\tX^\ui\big\|_F\big\|\tY^\ui\big\|_F \leq \big\|\X'^\ui\big\|_F\big\|\Y'^\ui\big\|_F.
\end{align*}
\end{lem}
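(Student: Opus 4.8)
The plan is to exploit the explicit form of the balancing step (lines~\ref{line:XY}--\ref{line:XY2}) that produces $\tX^\ui$ and $\tY^\ui$ from the orthonormal bases returned by the two calls to SPM (Algorithm~\ref{alg:SPM}). Writing $\Z_x$ and $\Z_y$ for those column-orthonormal matrices, the compressed interaction matrix $\Z_x^\top\X'^{\ui\top}\Y'^\ui\Z_y$ has an SVD $\U\mSigma\V^\top$, and the balancing sets $\tX^\ui=\mSigma^{1/2}\U^\top\Z_x^\top$ and $\tY^\ui=\mSigma^{1/2}\V^\top\Z_y^\top$, splitting the singular values symmetrically through the square root. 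First I would read off the two Frobenius norms from this form.

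Since $\Z_x$ and $\U$ have orthonormal columns, a trace computation gives
\begin{align*}
\big\|\tX^\ui\big\|_F^2 = \tr\big(\mSigma^{1/2}\U^\top\Z_x^\top\Z_x\U\mSigma^{1/2}\big) = \tr(\mSigma) = \Norm{\mSigma}_*,
\end{align*}
and identically $\big\|\tY^\ui\big\|_F^2=\Norm{\mSigma}_*$. Moreover $\tX^{\ui\top}\tY^\ui=\Z_x\U\mSigma\V^\top\Z_y^\top$ with $\Z_x\U$ and $\Z_y\V$ column-orthonormal, so its nonzero singular values are exactly the diagonal entries of $\mSigma$ and $\Norm{\tX^{\ui\top}\tY^\ui}_*=\Norm{\mSigma}_*$. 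Combining the three equalities yields the central identity
\begin{align*}
\big\|\tX^\ui\big\|_F\big\|\tY^\ui\big\|_F=\big\|\tX^{\ui\top}\tY^\ui\big\|_*,
\end{align*}
so the balancing converts the product of Frobenius norms into a single nuclear norm.

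It then remains to bound $\big\|\tX^{\ui\top}\tY^\ui\big\|_*$ by $\big\|\X'^\ui\big\|_F\big\|\Y'^\ui\big\|_F$, which I would do by chaining two facts. First, $\tX^{\ui\top}\tY^\ui=\Z_x\Z_x^\top\,\X'^{\ui\top}\Y'^\ui\,\Z_y\Z_y^\top$ is obtained from $\X'^{\ui\top}\Y'^\ui$ by left- and right-multiplication by the orthogonal projections $\Z_x\Z_x^\top$ and $\Z_y\Z_y^\top$, each of spectral norm at most one; since left/right multiplication by a matrix of spectral norm at most one cannot increase the nuclear norm, $\big\|\tX^{\ui\top}\tY^\ui\big\|_*\le\big\|\X'^{\ui\top}\Y'^\ui\big\|_*$. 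Second, by~\citeauthor{srebro2005maximum}'s Lemma~1 (already used in the proof of Lemma~\ref{lem:compress}), $\big\|\X'^{\ui\top}\Y'^\ui\big\|_*\le\big\|\X'^\ui\big\|_F\big\|\Y'^\ui\big\|_F$. Chaining these with the central identity closes the proof.

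The main obstacle is bookkeeping around the compression step rather than any delicate estimate: I must pin down that lines~\ref{line:XY}--\ref{line:XY2} indeed produce $\tX^\ui$, $\tY^\ui$ in the canonical square-root form above, and that SPM returns genuinely column-orthonormal $\Z_x,\Z_y$, so that both the trace identity $\big\|\tX^\ui\big\|_F^2=\big\|\tY^\ui\big\|_F^2=\big\|\tX^{\ui\top}\tY^\ui\big\|_*$ and the projection contraction apply verbatim. If the compression also shrinks the singular values (replacing $\mSigma$ by some $\hSigma\preceq\mSigma$), the identity persists for $\hSigma$ and the first inequality in the chain only becomes stronger, so the argument is unaffected.
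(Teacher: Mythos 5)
Your argument is correct in substance and, up to one bookkeeping slip, it is the nuclear-norm version of the paper's own computation. The slip: Algorithm~\ref{alg:SCOD} makes a \emph{single} call to SPM per compression (line~\ref{line:SPM}), returning one column-orthonormal matrix $\Z\in\BR^{d_x\times m}$; line~\ref{line:balance} then takes the SVD $\tU\tSigma\tV^\top$ of $\Z^\top\X'^{\ui\top}\Y'^\ui$ and sets $\tX^\ui=\tSigma^{1/2}\tU^\top\Z^\top$, $\tY^\ui=\tSigma^{1/2}\tV^\top$. There is no second basis $\Z_y$ on the $\Y$ side --- the role of your $\Z_y\V$ is played directly by the column-orthonormal factor $\tV$. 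Your two-projection setup subsumes the actual algorithm as the special case $\Z_y=\I_{d_y}$, so every step survives (with one fewer contraction to check), but the write-up should be matched to the algorithm as stated.

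With that fixed, your chain is exactly right: $\Norm{\tX^\ui}_F^2=\Norm{\tY^\ui}_F^2=\tr(\tSigma)$, hence $\Norm{\tX^\ui}_F\Norm{\tY^\ui}_F=\Norm{\tX^{\ui\top}\tY^\ui}_*$; then $\tX^{\ui\top}\tY^\ui=\Z\Z^\top\X'^{\ui\top}\Y'^\ui$ and the projection $\Z\Z^\top$ contracts the nuclear norm; finally \citeauthor{srebro2005maximum}'s Lemma~1 gives $\Norm{\X'^{\ui\top}\Y'^\ui}_*\le\Norm{\X'^\ui}_F\Norm{\Y'^\ui}_F$. The paper's proof starts from the same balanced-splitting fact $\sigma_i^2(\tX^\ui)=\sigma_i^2(\tY^\ui)=\sigma_i\big(\tX^{\ui\top}\tY^\ui\big)$, but then runs the whole chain in the Frobenius norm: it writes $\Norm{\tX^\ui}_F^2\Norm{\tY^\ui}_F^2=\Norm{\tX^{\ui\top}\tY^\ui}_F^2$ and bounds the latter by trace manipulations using $\Z\Z^\top\preceq\I$ and Frobenius submultiplicativity. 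Read literally, that first equality does not hold: the left side equals $(\tr(\tSigma))^2=\Norm{\tX^{\ui\top}\tY^\ui}_*^2$, whereas $\Norm{\tX^{\ui\top}\tY^\ui}_F^2=\tr(\tSigma^2)$, and since $\Norm{\cdot}_*\ge\Norm{\cdot}_F$ the displayed relation is really ``$\ge$'', which points the wrong way for the lemma. So your route is not merely an alternative: switching the chain to the nuclear norm (with the trace step replaced by nuclear-norm contraction under projections, and the final Cauchy--Schwarz step by Srebro's lemma, which the paper already invokes in Lemma~\ref{lem:compress}) is precisely what makes the paper's argument close, at the cost of using two standard nuclear-norm facts in place of elementary trace inequalities.
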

Since the analysis of merging error is similar to standard COD, we can establish the error bound of SCOD by using above lemmas.
\begin{thm}\label{thm:sparse-error}
Setting $q_i=\tilde\Theta(\log(md_1/p_i)/\eps)$ with constant $\eps>0$ and $p_i=\delta/2i^2$, with probability $1-\delta$, the outputs $\A$ and $\B$ of Algorithm~\ref{alg:SCOD} hold that 
{\begin{align*}
  & \norm{\X^\top\Y-\A^\top\B} \\
\leq & \left(\frac{2+\eps}{m-k}+\frac{(1+\eps)k}{(m-k)^2}\right)
\left(\Norm{\X}_F\Norm{\Y}_F -\big\|\X^\top\Y\big\|_k\right)
\end{align*}}
for all $k<m$.
\end{thm}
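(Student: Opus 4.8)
The plan is to combine the two sources of error additively and then control each using the lemmas already established. Recall that SCOD is equivalent to running standard COD on the estimators $\C$ and $\D$, so by the triangle inequality I would split
\begin{align*}
\norm{\X^\top\Y-\A^\top\B}
\leq \norm{\X^\top\Y-\C^\top\D} + \norm{\C^\top\D-\A^\top\B}.
\end{align*}
The first term is the total \emph{compressing error} and is bounded directly by Lemma~\ref{lem:compress}, contributing $\frac{1+\eps}{m-k}\left(\Norm{\X}_F\Norm{\Y}_F-\Norm{\X^\top\Y}_k\right)$ with probability $1-\delta$. The second term is the \emph{merging error} incurred by applying COD to $\C$ and $\D$, and here I would invoke Theorem~\ref{thm:dense-error} with input matrices $\C$ and $\D$ in place of $\X$ and $\Y$, giving
\begin{align*}
\norm{\C^\top\D-\A^\top\B}
\leq \frac{1}{m-k}\Big(\Norm{\C}_F\Norm{\D}_F-\big\|\C^\top\D\big\|_k\Big).
\end{align*}

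The crux is then to re-express the merging-error bound in terms of $\X$ and $\Y$ rather than the estimators $\C$ and $\D$. The key tool is Lemma~\ref{lem:fro-shrink}, which shows the compression step does not increase the product of Frobenius norms at each block: $\Norm{\tX^\ui}_F\Norm{\tY^\ui}_F\leq\Norm{\X'^\ui}_F\Norm{\Y'^\ui}_F$. Using the definitions $\C=[\tX^{(1)};\cdots;\tX^{(T)}]$ and $\D=[\tY^{(1)};\cdots;\tY^{(T)}]$, I would write $\Norm{\C}_F^2=\sum_i\Norm{\tX^\ui}_F^2$ and $\Norm{\D}_F^2=\sum_i\Norm{\tY^\ui}_F^2$, then apply Cauchy--Schwarz across the blocks together with Lemma~\ref{lem:fro-shrink} to conclude $\Norm{\C}_F\Norm{\D}_F\leq\Norm{\X}_F\Norm{\Y}_F$ (since $\X=[\X'^{(1)};\cdots;\X'^{(T)}]$ and similarly for $\Y$). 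For the Ky Fan term I would need a lower bound $\big\|\C^\top\D\big\|_k\geq\big\|\X^\top\Y\big\|_k-\norm{\X^\top\Y-\C^\top\D}\cdot(\text{rank factor})$; more carefully, I expect to use the perturbation inequality $\big\|\X^\top\Y\big\|_k-\big\|\C^\top\D\big\|_k\leq k\,\norm{\X^\top\Y-\C^\top\D}$ for Ky Fan $k$-norms, so that the missing Ky Fan mass is controlled by $k$ times the already-bounded compressing error.

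Assembling these, the merging term becomes
\begin{align*}
\frac{1}{m-k}\Big(\Norm{\X}_F\Norm{\Y}_F-\big\|\X^\top\Y\big\|_k + k\,\norm{\X^\top\Y-\C^\top\D}\Big),
\end{align*}
and substituting the Lemma~\ref{lem:compress} bound for $\norm{\X^\top\Y-\C^\top\D}$ into the trailing $k$-term yields the extra $\frac{(1+\eps)k}{(m-k)^2}\left(\Norm{\X}_F\Norm{\Y}_F-\Norm{\X^\top\Y}_k\right)$ contribution. Adding the compressing error $\frac{1+\eps}{m-k}(\cdots)$ and the leading part of the merging error $\frac{1}{m-k}(\cdots)$ produces the coefficient $\frac{2+\eps}{m-k}$, matching the claimed bound. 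I expect the main obstacle to be the Ky Fan perturbation step: bounding $\big\|\X^\top\Y\big\|_k-\big\|\C^\top\D\big\|_k$ by $k\,\norm{\X^\top\Y-\C^\top\D}$ requires care, since this is where the spectral-to-nuclear conversion factor $k$ enters, and getting the dependence exactly right (rather than losing a factor) is what produces the precise coefficients in the statement. The probability bookkeeping is inherited directly from Lemma~\ref{lem:compress}, so no additional union bound beyond the $\sum_i p_i\leq\delta$ argument is needed.
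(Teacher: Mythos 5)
Your proposal follows the same route as the paper's proof: the same triangle-inequality split into compressing error and merging error, Lemma~\ref{lem:compress} for the former (and again inside the $k$-term), the Ky Fan triangle inequality combined with $\Norm{\M}_k\le k\norm{\M}$ for the perturbation step, and Lemma~\ref{lem:fro-shrink} to return to $\Norm{\X}_F\Norm{\Y}_F$; the probability bookkeeping is also identical. However, one step fails as you stated it: the claim that Lemma~\ref{lem:fro-shrink} plus Cauchy--Schwarz yields $\Norm{\C}_F\Norm{\D}_F\le\Norm{\X}_F\Norm{\Y}_F$. Write $a_i=\Norm{\tX^\ui}_F$, $b_i=\Norm{\tY^\ui}_F$, $c_i=\Norm{\X'^\ui}_F$, $d_i=\Norm{\Y'^\ui}_F$. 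Lemma~\ref{lem:fro-shrink} gives $a_ib_i\le c_id_i$, but Cauchy--Schwarz bounds $\sum_i a_ib_i$ \emph{from above} by $\Norm{\C}_F\Norm{\D}_F=\sqrt{\sum_i a_i^2}\sqrt{\sum_i b_i^2}$, i.e.\ it points the wrong way for your target, and the implication from ``$a_ib_i\le c_id_i$ for all $i$'' to ``$\sqrt{\sum_i a_i^2}\sqrt{\sum_i b_i^2}\le\sqrt{\sum_i c_i^2}\sqrt{\sum_i d_i^2}$'' is false in general: take $a=(1,0)$, $b=(0,1)$ and $c=d=(t,t)$ with $t$ small, so the left side is $1$ while the right side is $2t^2$.

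The gap is closed by a property your argument never invokes: the balancing step in lines~\ref{line:XY}--\ref{line:XY2} of Algorithm~\ref{alg:SCOD} guarantees $\tX^\ui\tX^{\ui\top}=\tY^\ui\tY^{\ui\top}=\tSigma$, hence $a_i=b_i$ for every block (this is the first line of the paper's proof of Lemma~\ref{lem:fro-shrink}). With that, $\Norm{\C}_F\Norm{\D}_F=\sum_i a_i^2=\sum_i a_ib_i\le\sum_i c_id_i\le\Norm{\X}_F\Norm{\Y}_F$, where the last inequality is now a legitimate use of Cauchy--Schwarz. This is essentially how the paper proceeds, except that instead of invoking Theorem~\ref{thm:dense-error} as a black box (which produces $\Norm{\C}_F\Norm{\D}_F$), it re-runs the proof of Theorem~\ref{thm:dense-error} on the block stream so that the merging bound contains the per-block sum $\sum_i a_ib_i$ directly, to which Lemma~\ref{lem:fro-shrink} and then Cauchy--Schwarz apply without any detour through $\Norm{\C}_F\Norm{\D}_F$. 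With this one-line repair (or by switching to the per-block sum), your argument is complete and yields exactly the stated constants.
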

\begin{proof}
Consider that $\A$ and $\B$ can be viewed as the output of running Algorithm~\ref{alg:COD} with input matrices
\begin{align*}
    \C=[\tX^{(1)};\cdots;\tX^{(T)}] \text{~~and~~} \D=[\tY^{(1)};\cdots;\tY^{(T)}].
\end{align*} 
Following the proof of Theorem~\ref{thm:dense-error}, we have
{\small\begin{align*}
\begin{split}
  & \norm{\C^\top\D-\A^\top\bf\B} \\
\leq & \frac{1}{m-k}\left(\sum_{i=1}^T\big\|\tX^\ui\big\|_F\big\|\tY^\ui\big\|_F -\big\|\C^\top\D\big\|_k\right) \\
\leq & \frac{\sum_{i=1}^T\big\|\tX^\ui\big\|_F\big\|\tY^\ui\big\|_F - \big\|\X^\top\Y\big\|_k + \big\|\X^\top\Y-\C^\top\D\big\|_k}{m-k} \\
\leq & \frac{\sum_{i=1}^T\big\|\tX^\ui\big\|_F\big\|\tY^\ui\big\|_F - \big\|\X^\top\Y\big\|_k + k\big\|\X^\top\Y-\C^\top\D\big\|_2}{m-k} \\
\leq & \frac{1}{m-k}\left(\sum_{i=1}^T\big\|\X'^\ui\big\|_F\big\|\Y'^\ui\big\|_F - \big\|\X^\top\Y\big\|_k\right) \\
 &\quad + \frac{(1+\eps)k}{(m-k)^2}\left(\Norm{\X}_F\Norm{\Y}_F-\big\|\X^\top\Y\big\|_k\right) \\
\leq & \left(\frac{1}{m-k} + \frac{(1+\eps)k}{(m-k)^2}\right)\left(\Norm{\X}_F\Norm{\Y}_F-\big\|\X^\top\Y\big\|_k\right)
\end{split}
\end{align*}}
where we use Lemma~\ref{lem:compress},~\ref{lem:fro-shrink} and triangle inequality.

Combing above results and Lemma~\ref{lem:compress}, we have
{\small\begin{align*}
 & \norm{\X^\top\Y-\A^\top\B} \\
\leq & \norm{\X^\top\Y-\C^\top\D} + \norm{\C^\top\D-\A^\top\B} \\
\leq & \left(\frac{2+\eps}{m-k}+\frac{(1+\eps)k}{(m-k)^2}\right)
\left(\Norm{\X}_F\Norm{\Y}_F -\big\|\X^\top\Y\big\|_k\right),
\end{align*}}
with probability at least $1-\delta$. 
\end{proof}

\subsection{Complexity Analysis}

We use the constant-word-size model for our analysis like that of sparse FD~\cite{ghashami2016frequent}. We suppose floating point numbers are represented by a constant number of bits, random access into memory requires $\fO(1)$ time and multiplying a sparse matrix $\M$ by a dense vector requires $\fO(\nnz(\M))$ time and storing $\M$ requires $\fO(\nnz(\M))$ space. 

The procedure of SCOD (Algorithm~\ref{alg:SCOD}) implies the buffer $\X'$ and $\Y'$ is sparse and contains at most $m(d_x+d_y)$ non-zero entries and it is not difficult to verify that all dense matrices in the algorithm cost no more than $\fO(m(d_x+d_y))$ space. 
Hence, the space complexity of SCOD is $\fO(m(d_x+d_y))$ in total which is the same as COD~(Algorithm \ref{alg:COD}).

Then we analyze the time complexity of SCOD.
The constraints on buffer size means we have
\begin{align*}
    T \leq \frac{\nnz(\X)+\nnz(\Y)}{m(d_x+d_y)} + \frac{n}{d_x+d_y}.
\end{align*}

Since each QR factorization or SVD on $m\times d$ matrix cost $\fO(m^2d)$ time, the operation on dense matrices of Algorithm~\ref{alg:SCOD} from line \ref{line:balance}-\ref{line:spd2} requires at most
\begin{align*}
    \fO(m^2(d_x+d_y)T) = \fO(m(\nnz(\X)+\nnz(\Y))+m^2n).
\end{align*}

Note that SCOD calls SPM with input $\M=\X'^\top\Y'$. Since both $\X'$ and $\Y'$ are sparse, it is unnecessary to construct $\M$ explicitly and we can multiply $\X'$ and $\Y'$ on $\G$ separately in line \ref{line:spm-multiply} of Algorithm~\ref{alg:SPM}. Then the time complexity of executing SPM needs
$\fO(mq_i(\nnz(\X'^\ui)+\nnz(\Y'^\ui))+m^2d_x)$
when the algorithm enters ``then'' section at the $i$-th time. Following the upper bound of $T$ and the setting of $q_i$ in Theorem~\ref{thm:sparse-error}, the calls of SPM in Algorithm~\ref{alg:SPM} entirely takes at most
{\small\begin{align*}
 & \fO\left(\sum_{i=1}^T\left(mq_i(\nnz(\X'^\ui)+\nnz(\Y'^\ui))+m^2d_x\right)\right) \\
\leq &\fO\left(mq_T(\nnz(\X)+\nnz(\Y))+Tm^2d_x\right) \\
= & \tilde\fO\big(m(\nnz(\X) + \nnz(\Y)) + m^2n\big).
\end{align*}}
Hence, the total time complexity of proposed SCOD is $\tilde\fO\big(m(\nnz(\X) + \nnz(\Y)) + m^2n\big)$.

\section{Numerical Experiments}\label{section:experiments}

In this section, we empirically compare the proposed  sparse co-occurring directions (SCOD) with frequent direction based AMM (FD-AMM)~\cite{ye2016frequent}, co-occurring directions (COD)~\cite{mroueh2016co} and sparse frequent direction based AMM algorithm (SFD-AMM)\footnote{SFD-AMM refers to the method simply replacing FD step in FD-AMM with sparse frequent directions~\cite{ghashami2016frequent}. We provide more detailed discussion about SFD-AMM in appendix.}. 
Instead of increasing $q_i$ logarithmically as the analysis of Theorem~\ref{thm:lower-bound}, we fix $q_i=5$ in our experiment since the empirical error arise from subspace power method is very small in practice.

We evaluate performance of all algorithms on cross-language datasets: Amazon Product Reviews (APR), PAN-PC-11 (PAN), JRC Acquis (JRC) and Europarl (EURO) which contain millions of English (EN), French (FR) and Spanish (ES) sentences~\cite{prettenhofer2010cross,potthast2010evaluation,potthast2011cross,koehn2005europarl}. We use bag-of-words feature for our experiments. All of input matrices are large but very sparse and we summary the parameters in Table~\ref{table:dataset}.

We demonstrate sketch-error and time-error comparisons in Figure~\ref{figure:sketch-error} and~\ref{figure:time-error} respectively. It is apparently that SCOD always performs better than all baseline algorithms. 
We do not include the curve of FD-AMM and COD in time-error comparison because these two algorithms take much more time than others. Due to the limit of space, we defer the result of sketch-time comparison and detailed computing infrastructure in appendix. 

\section{Conclusion}\label{section:conclusion}

In this paper, we first improved the error bound of a deterministic sketching algorithm COD for streaming AMM problem.
In symmetric case, our  result matches the error bound of classical algorithm FD.
We also proved COD matches the space lower bound complexity to achieve our error bound. 
In addition, we proposed a sparse variant of COD  with a reasonable error bound. 
The experimental results show that the proposed algorithm  has better performance than baseline methods in practice. 

It would be interesting to borrow the idea of this paper to establish better theoretical guarantees and streaming algorithms for more classical machine learning and statistical models such as canonical correlation analysis~\cite{hotelling1992relations,avron2013efficient,ye2016frequent}, generalized eigenvector  decomposition~\cite{bhatia2018gen,gene1996matrix} and spectral co-clustering~\cite{dhillon2001co}.

\begin{table*}[ht]
    \centering\renewcommand{\arraystretch}{1.2}
    \vskip0.2cm
    {\begin{tabular}{|c|ccc|cc|}
        \hline
        Dataset & $n$ & $d_x$ & $d_y$ & ${\rm density}(\X)$ & ${\rm density}(\Y)$ \\\hline
        APR (EN-FR) & $2.32 \times 10^4$ & $2.80 \times 10^4$ & $4.28 \times 10^4$ & $6.31 \times 10^{-4}$ & $4.53 \times 10^{-4}$ \\
        PAN (EN-FR) & $8.90 \times 10^4$ & $5.12 \times 10^4$ & $9.96 \times 10^4$ & $4.38 \times 10^{-4}$ & $2.43 \times 10^{-4}$ \\
        JRC (EN-FR) & $1.50 \times 10^5$ & $1.72 \times 10^5$ & $1.87 \times 10^5$ & $1.65 \times 10^{-4}$ & $1.64 \times 10^{-4}$ \\
        JRC (EN-ES) & $1.50 \times 10^5$ & $1.72 \times 10^5$ & $1.92 \times 10^5$ & $1.65 \times 10^{-4}$ & $1.60 \times 10^{-4}$ \\
        JRC (FR-ES) & $1.50 \times 10^5$ & $1.87 \times 10^5$ & $1.92 \times 10^5$ & $1.64 \times 10^{-4}$ & $1.60 \times 10^{-4}$ \\
        EURO (EN-FR) & $4.76 \times 10^5$ & $7.25 \times 10^4$ & $8.77 \times 10^4$ & $3.46 \times 10^{-4}$ & $3.65 \times 10^{-4}$\\
        EURO (EN-ES) & $4.76 \times 10^5$ & $7.25 \times 10^4$ & $8.80 \times 10^4$ & $3.46 \times 10^{-4}$ & $3.47 \times 10^{-4}$ \\
        EURO (FR-ES) & $4.76 \times 10^5$ & $8.77 \times 10^4$ & $8.80 \times 10^4$ & $3.65 \times 10^{-4}$ & $3.47 \times 10^{-4}$ \\\hline
    \end{tabular}}
    \caption{We present the size and density of datasets used in our experiments, where ${\rm density}(\X)=\nnz(\X)/nd_x$ and ${\rm density}(\Y)=\nnz(\Y)/nd_y$. All of these datasets are publicly available~\cite{ferrero2016multilingual}.}\label{table:dataset}
\end{table*}

\begin{figure*}[ht]
\centering
\begin{tabular}{cccc}
     \includegraphics[scale=0.30]{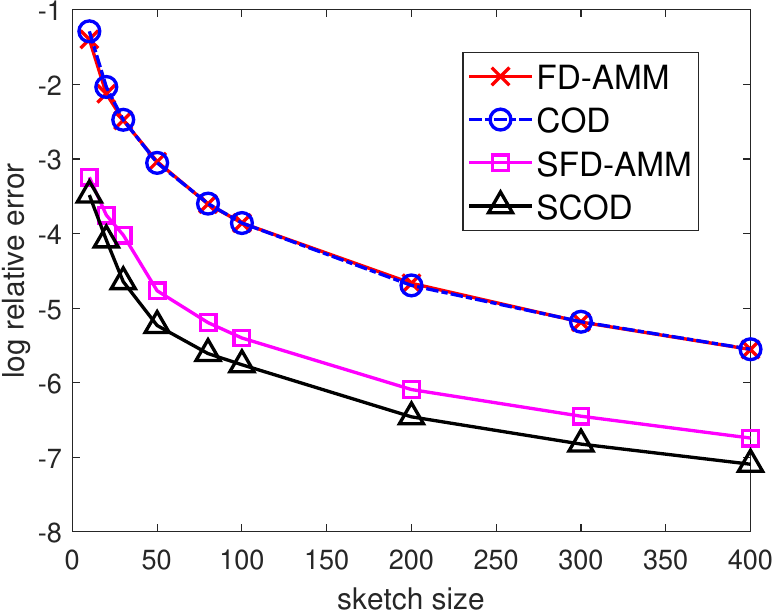} &
     \includegraphics[scale=0.30]{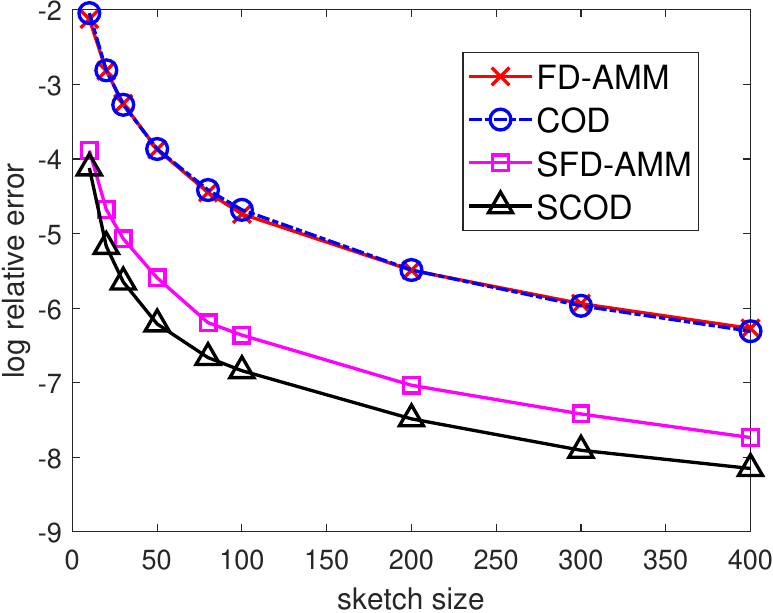} &
     \includegraphics[scale=0.30]{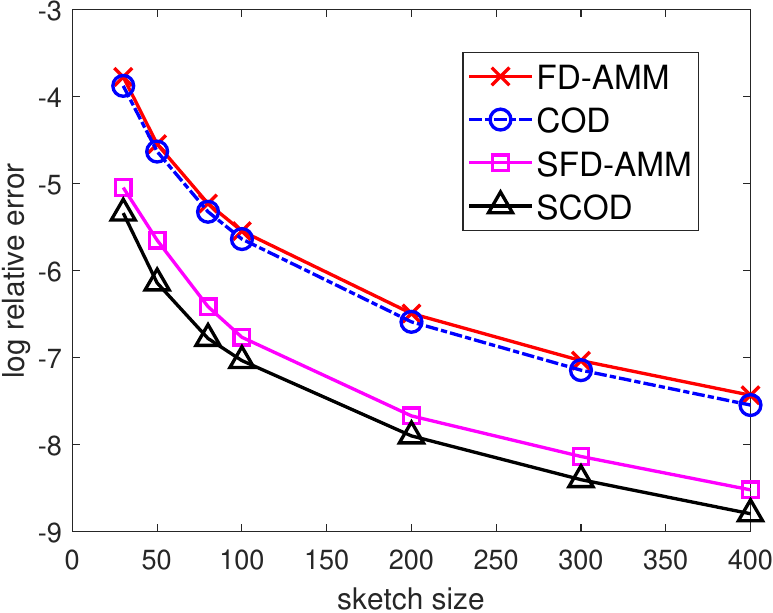} &a
     \includegraphics[scale=0.30]{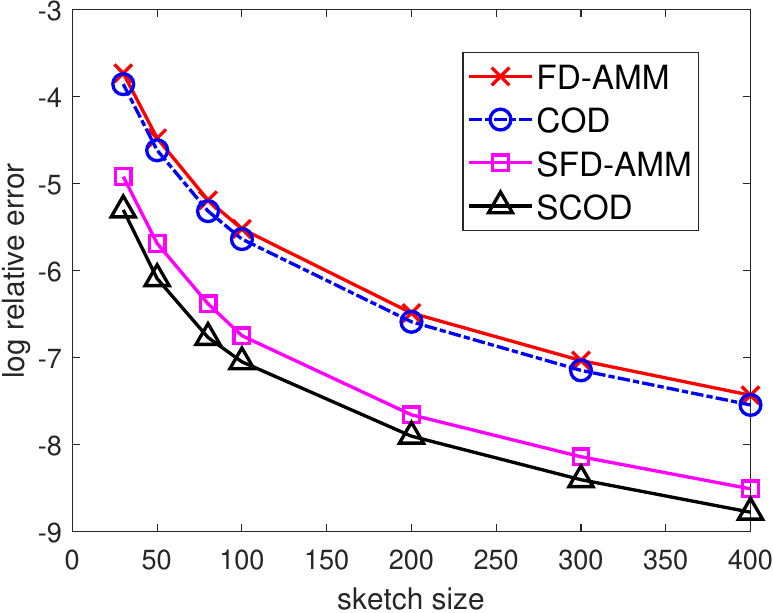} \\
     \small (a) APR (EN-FR) & 
     \small (b) PAN (EN-FR) & 
     \small (d) JRC (EN-FR) &
     \small (e) JRC (FR-ES) \\[0.1cm]
     \includegraphics[scale=0.30]{figure/jrc0-a.pdf} &
     \includegraphics[scale=0.30]{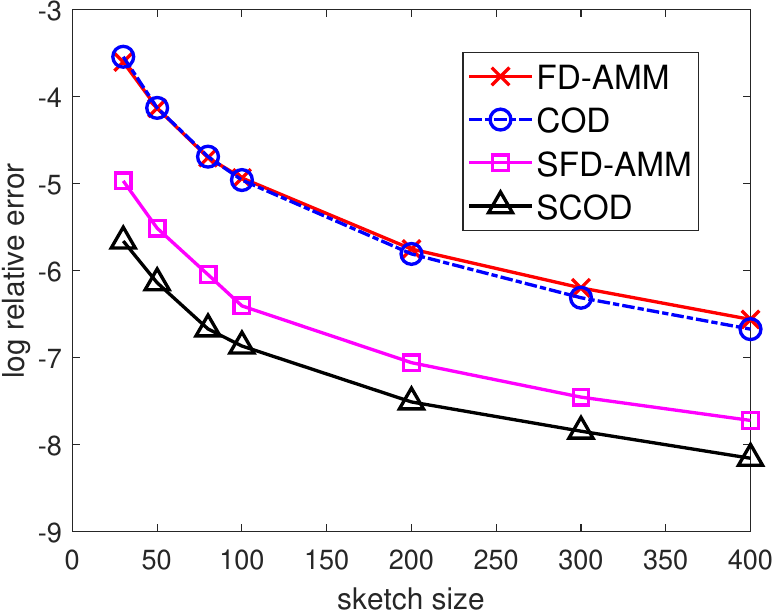} &
     \includegraphics[scale=0.30]{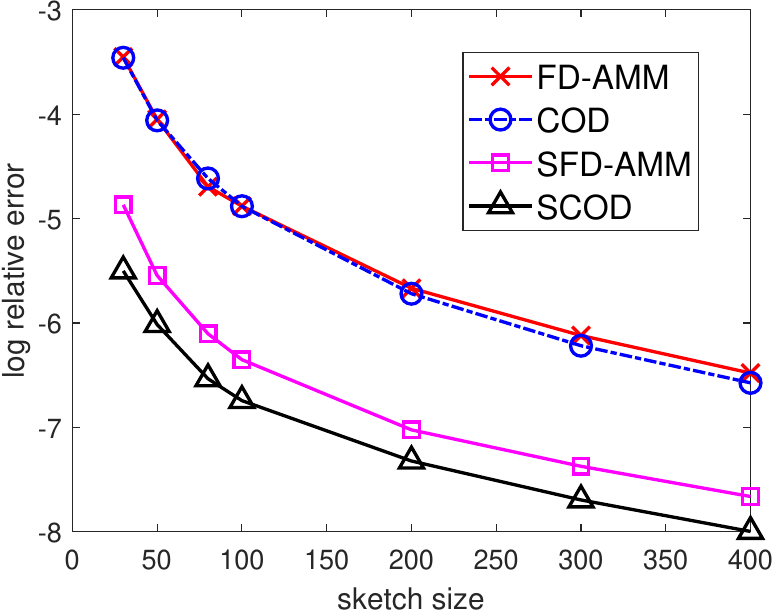} &
     \includegraphics[scale=0.30]{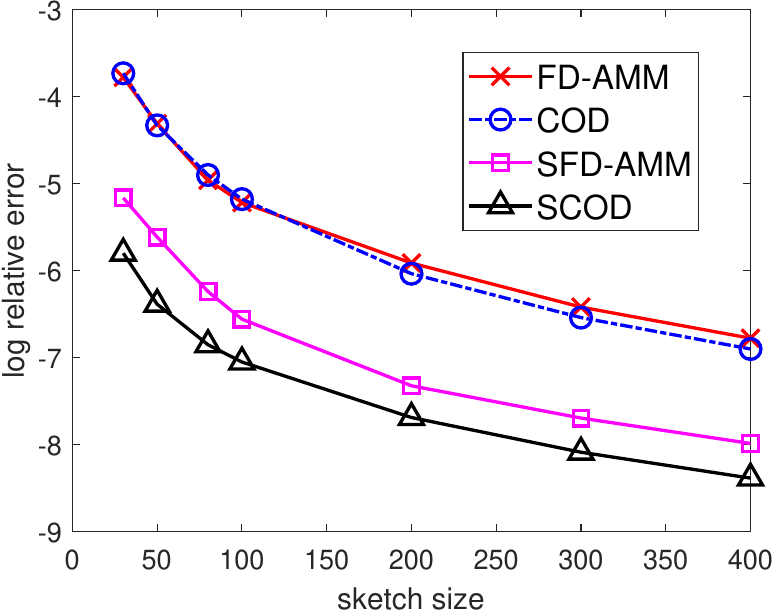} \\
     \small (f) JRC (FR-ES) & 
     \small (g) EURO (EN-FR) & 
     \small (j) EURO (EN-ES) &
     \small (i) EURO (FR-ES)
\end{tabular}
\caption{The plot of sketch size against relative spectral norm error}\label{figure:sketch-error}
\end{figure*}

\begin{figure*}[!ht]
\centering
\begin{tabular}{cccc}
     \includegraphics[scale=0.299]{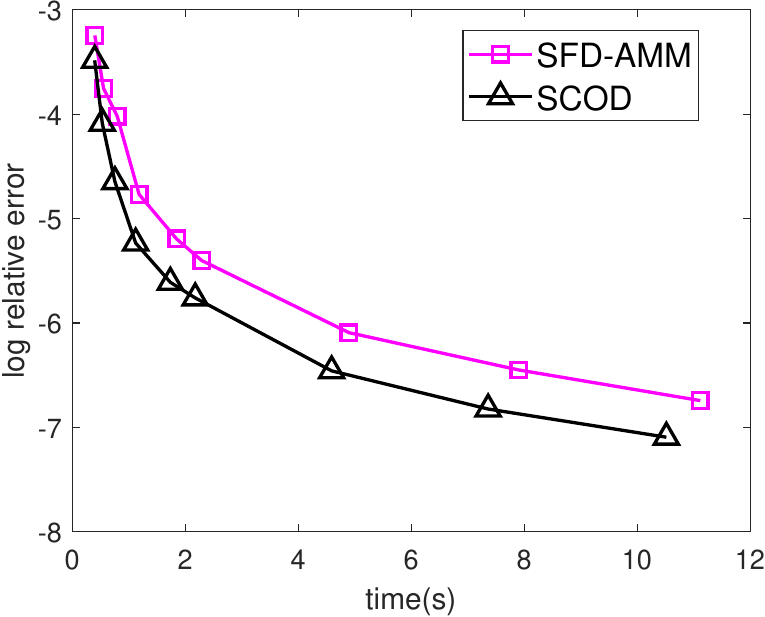} &
     \includegraphics[scale=0.299]{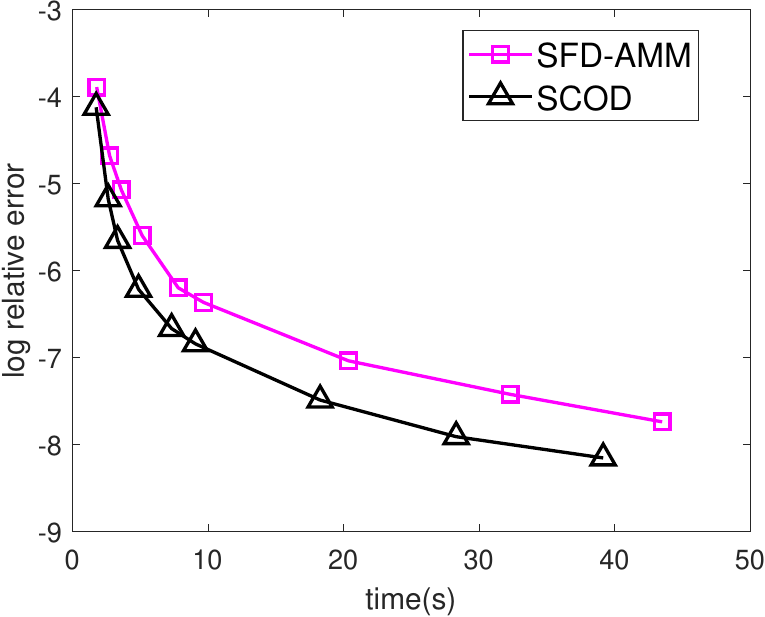} &
     \includegraphics[scale=0.299]{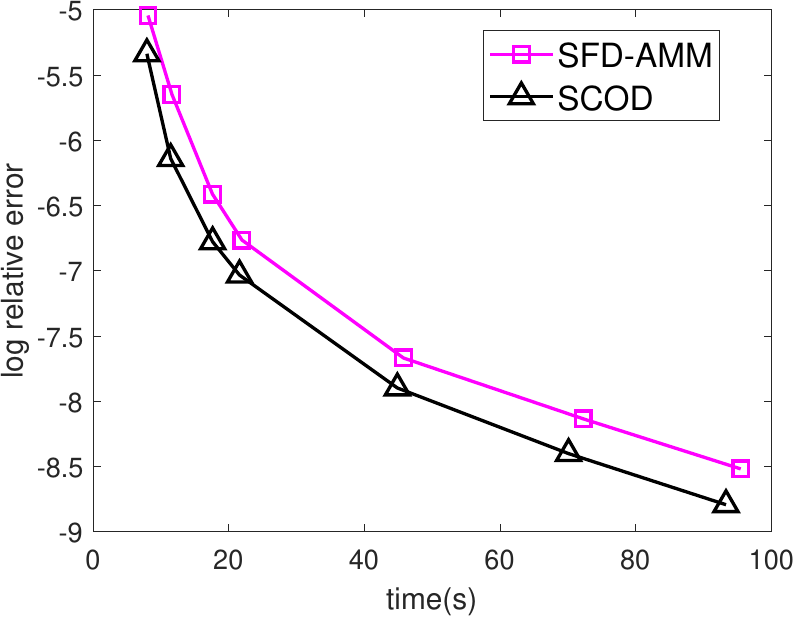} &
     \includegraphics[scale=0.299]{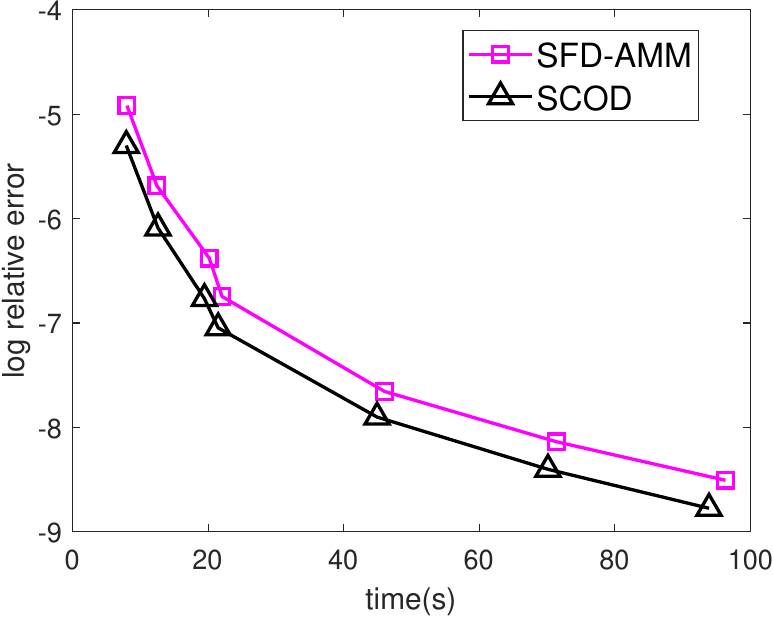} \\
     \small (a) APR (EN-FR) & 
     \small (b) PAN (EN-FR) & 
     \small (d) JRC (EN-FR) &
     \small (e) JRC (FR-ES) \\[0.1cm]
     \includegraphics[scale=0.299]{figure/jrc0-c.pdf} &
     \includegraphics[scale=0.299]{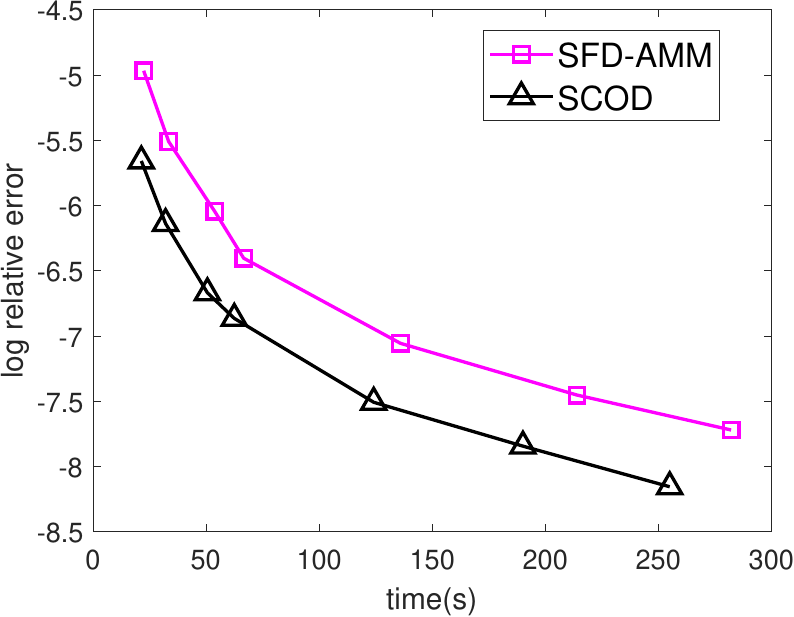} &
     \includegraphics[scale=0.299]{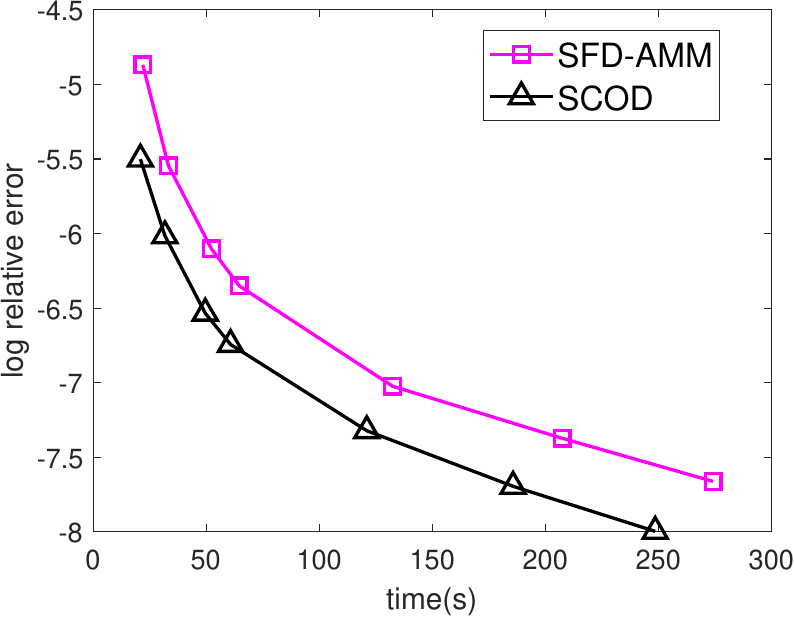} &
     \includegraphics[scale=0.299]{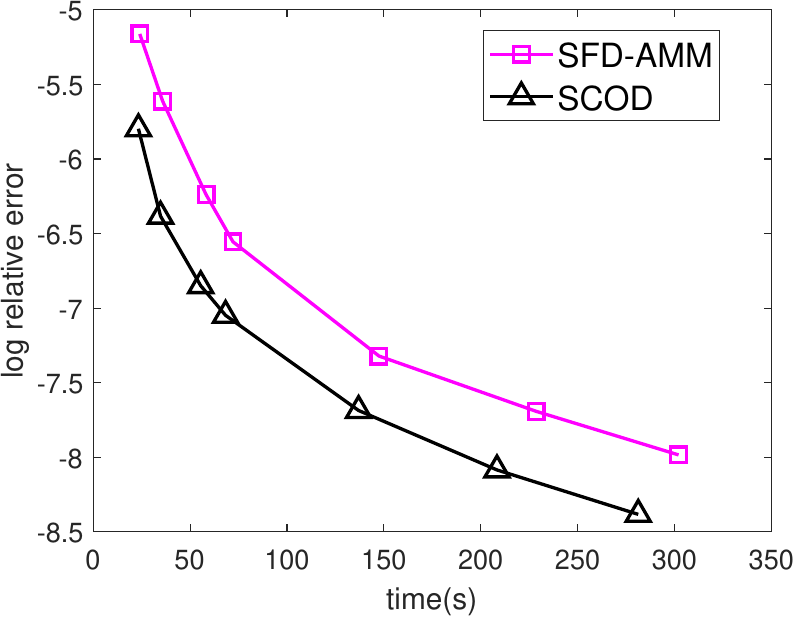} \\
     \small (f) JRC (FR-ES) & 
     \small (g) EURO (EN-FR) & 
     \small (j) EURO (EN-ES) &
     \small (i) EURO (FR-ES)
\end{tabular}
\caption{The plot of time (s) against relative spectral norm error}\label{figure:time-error}
\end{figure*}

\section*{Acknowledgements}
Luo Luo is supported by GRF 16201320. Haishan Ye is supported by Shenzhen Research Institute of Big Data (named “Automated Machine Learning”). 

\bibliography{reference}

\begin{thebibliography}{39}
\providecommand{\natexlab}[1]{#1}
\providecommand{\url}[1]{\texttt{#1}}
\providecommand{\urlprefix}{URL }
\expandafter\ifx\csname urlstyle\endcsname\relax
  \providecommand{\doi}[1]{doi:\discretionary{}{}{}#1}\else
  \providecommand{\doi}{doi:\discretionary{}{}{}\begingroup
  \urlstyle{rm}\Url}\fi

\bibitem[{Avron et~al.(2013)Avron, Boutsidis, Toledo, and
  Zouzias}]{avron2013efficient}
Avron, H.; Boutsidis, C.; Toledo, S.; and Zouzias, A. 2013.
\newblock Efficient dimensionality reduction for canonical correlation
  analysis.
\newblock In \emph{ICML}.

\bibitem[{Bhatia et~al.(2018)Bhatia, Pacchiano, Flammarion, Bartlett, and
  Jordan}]{bhatia2018gen}
Bhatia, K.; Pacchiano, A.; Flammarion, N.; Bartlett, P.~L.; and Jordan, M.~I.
  2018.
\newblock Gen-{Oja}: Simple \& efficient algorithm for streaming generalized
  eigenvector computation.
\newblock In \emph{NIPS}.

\bibitem[{Chen et~al.(2020)Chen, Luo, Zhang, Yu, and Lian}]{chen2020bandit}
Chen, C.; Luo, L.; Zhang, W.; Yu, Y.; and Lian, Y. 2020.
\newblock Efficient and Robust High-Dimensional Linear Contextual Bandits.
\newblock In \emph{IJCAI}.

\bibitem[{Clarkson and Woodruff(2017)}]{clarkson2017low}
Clarkson, K.~L.; and Woodruff, D.~P. 2017.
\newblock Low-rank approximation and regression in input sparsity time.
\newblock \emph{Journal of the ACM} 63(6): 1--45.

\bibitem[{Desai, Ghashami, and Phillips(2016)}]{desai2016improved}
Desai, A.; Ghashami, M.; and Phillips, J.~M. 2016.
\newblock Improved practical matrix sketching with guarantees.
\newblock \emph{IEEE Transactions on Knowledge and Data Engineering} 28(7):
  1678--1690.

\bibitem[{Dhillon(2001)}]{dhillon2001co}
Dhillon, I.~S. 2001.
\newblock Co-clustering documents and words using bipartite spectral graph
  partitioning.
\newblock In \emph{SIGKDD}.

\bibitem[{Dickens(2020)}]{dickens2020ridge}
Dickens, C. 2020.
\newblock Ridge Regression with Frequent Directions: Statistical and
  Optimization Perspectives.
\newblock \emph{arXiv preprint:2011.03607} .

\bibitem[{Drineas, Kannan, and Mahoney(2006)}]{drineas2006fast}
Drineas, P.; Kannan, R.; and Mahoney, M.~W. 2006.
\newblock Fast Monte Carlo algorithms for matrices {I}: Approximating matrix
  multiplication.
\newblock \emph{SIAM Journal on Computing} 36(1): 132--157.

\bibitem[{Ferrero et~al.(2016)Ferrero, Agnes, Besacier, and
  Schwab}]{ferrero2016multilingual}
Ferrero, J.; Agnes, F.; Besacier, L.; and Schwab, D. 2016.
\newblock A multilingual, multi-style and multi-granularity dataset for
  cross-language textual similarity detection.
\newblock In \emph{LREC}.

\bibitem[{Ghashami, Liberty, and Phillips(2016)}]{ghashami2016efficient}
Ghashami, M.; Liberty, E.; and Phillips, J.~M. 2016.
\newblock Efficient frequent directions algorithm for sparse matrices.
\newblock In \emph{SIGKDD}.

\bibitem[{Ghashami et~al.(2016)Ghashami, Liberty, Phillips, and
  Woodruff}]{ghashami2016frequent}
Ghashami, M.; Liberty, E.; Phillips, J.~M.; and Woodruff, D.~P. 2016.
\newblock Frequent directions: Simple and deterministic matrix sketching.
\newblock \emph{SIAM Journal on Computing} 45(5): 1762--1792.

\bibitem[{Ghashami and Phillips(2014)}]{ghashami2014relative}
Ghashami, M.; and Phillips, J.~M. 2014.
\newblock Relative errors for deterministic low-rank matrix approximations.
\newblock In \emph{SODA}.

\bibitem[{Golub and Loan(1996)}]{gene1996matrix}
Golub, G.~H.; and Loan, C. F.~V. 1996.
\newblock Matrix computations.
\newblock \emph{Johns Hopkins Universtiy Press, 3rd edtion} .

\bibitem[{Horn and Johnson(1994)}]{horn1994topics}
Horn, R.~A.; and Johnson, C.~R. 1994.
\newblock \emph{Topics in matrix analysis}.
\newblock Cambridge university press.

\bibitem[{Hotelling(1992)}]{hotelling1992relations}
Hotelling, H. 1992.
\newblock Relations between two sets of variates.
\newblock In \emph{Breakthroughs in statistics}, 162--190. Springer.

\bibitem[{Huang(2019)}]{huang2019near}
Huang, Z. 2019.
\newblock Near optimal frequent directions for sketching dense and sparse
  matrices.
\newblock \emph{Journal of Machine Learning Research} 20(56): 1--23.

\bibitem[{Kapralov and Talwar(2013)}]{kapralov2013differentially}
Kapralov, M.; and Talwar, K. 2013.
\newblock On differentially private low rank approximation.
\newblock In \emph{SODA}.

\bibitem[{Koehn(2005)}]{koehn2005europarl}
Koehn, P. 2005.
\newblock Europarl: A parallel corpus for statistical machine translation.
\newblock In \emph{MT summit}, volume~5, 79--86. Citeseer.

\bibitem[{Kuzborskij, Cella, and Cesa-Bianchi(2019)}]{kuzborskij2019efficient}
Kuzborskij, I.; Cella, L.; and Cesa-Bianchi, N. 2019.
\newblock Efficient linear bandits through matrix sketching.
\newblock In \emph{AISTATS}.

\bibitem[{Liberty(2013)}]{liberty2013simple}
Liberty, E. 2013.
\newblock Simple and deterministic matrix sketching.
\newblock In \emph{SGIKDD}.

\bibitem[{Luo et~al.(2016)Luo, Agarwal, Cesa-Bianchi, and
  Langford}]{luo2016efficient}
Luo, H.; Agarwal, A.; Cesa-Bianchi, N.; and Langford, J. 2016.
\newblock Efficient second order online learning by sketching.
\newblock In \emph{NIPS}.

\bibitem[{Luo et~al.(2019)Luo, Chen, Zhang, Li, and Zhang}]{luo2019robust}
Luo, L.; Chen, C.; Zhang, Z.; Li, W.-J.; and Zhang, T. 2019.
\newblock Robust Frequent Directions with Application in Online Learning.
\newblock \emph{Journal of Machine Learning Research} 20(45): 1--41.

\bibitem[{Luo et~al.(2018)Luo, Zhang, Zhang, Zhu, Zhang, and
  Pei}]{luo2018sketched}
Luo, L.; Zhang, W.; Zhang, Z.; Zhu, W.; Zhang, T.; and Pei, J. 2018.
\newblock Sketched follow-the-regularized-leader for online factorization
  machine.
\newblock In \emph{SIGKDD}.

\bibitem[{Martinsson et~al.(2010)Martinsson, Szlam, Tygert
  et~al.}]{martinsson2010normalized}
Martinsson, P.-G.; Szlam, A.; Tygert, M.; et~al. 2010.
\newblock Normalized power iterations for the computation of {SVD}.
\newblock In \emph{NIPS Workshop on Low-rank Methods for Large-scale Machine
  Learning}.

\bibitem[{Misra and Gries(1982)}]{misra1982finding}
Misra, J.; and Gries, D. 1982.
\newblock Finding repeated elements.
\newblock \emph{Science of computer programming} 2(2): 143--152.

\bibitem[{Mroueh, Marcheret, and Goel(2017)}]{mroueh2016co}
Mroueh, Y.; Marcheret, E.; and Goel, V. 2017.
\newblock Co-{O}ccuring directions sketching for approximate matrix multiply.
\newblock In \emph{AISTATS}.

\bibitem[{Musco and Musco(2015)}]{musco2015randomized}
Musco, C.; and Musco, C. 2015.
\newblock Randomized block {K}rylov methods for stronger and faster approximate
  singular value decomposition.
\newblock In \emph{NIPS}.

\bibitem[{Potthast et~al.(2011)Potthast, Barr{\'o}n-Cede{\~n}o, Stein, and
  Rosso}]{potthast2011cross}
Potthast, M.; Barr{\'o}n-Cede{\~n}o, A.; Stein, B.; and Rosso, P. 2011.
\newblock Cross-language plagiarism detection.
\newblock \emph{Language Resources and Evaluation} 45(1): 45--62.

\bibitem[{Potthast et~al.(2010)Potthast, Stein, Barr{\'o}n-Cede{\~n}o, and
  Rosso}]{potthast2010evaluation}
Potthast, M.; Stein, B.; Barr{\'o}n-Cede{\~n}o, A.; and Rosso, P. 2010.
\newblock An evaluation framework for plagiarism detection.
\newblock In \emph{COLING}.

\bibitem[{Prettenhofer and Stein(2010)}]{prettenhofer2010cross}
Prettenhofer, P.; and Stein, B. 2010.
\newblock Cross-language text classification using structural correspondence
  learning.
\newblock In \emph{ACL}.

\bibitem[{Rudelson and Vershynin(2010)}]{rudelson2010non}
Rudelson, M.; and Vershynin, R. 2010.
\newblock Non-asymptotic theory of random matrices: extreme singular values.
\newblock In \emph{ICM}.

\bibitem[{Sarlos(2006)}]{sarlos2006improved}
Sarlos, T. 2006.
\newblock Improved approximation algorithms for large matrices via random
  projections.
\newblock In \emph{FOCS}.

\bibitem[{Shi and Phillips(2020)}]{shi2020deterministic}
Shi, B.; and Phillips, J.~M. 2020.
\newblock A deterministic streaming sketch for ridge regression.
\newblock \emph{arXiv preprint:2002.02013} .

\bibitem[{Srebro, Rennie, and Jaakkola(2005)}]{srebro2005maximum}
Srebro, N.; Rennie, J.; and Jaakkola, T.~S. 2005.
\newblock Maximum-margin matrix factorization.
\newblock In \emph{NIPS}.

\bibitem[{Wan and Zhang(2020)}]{wan2020approximate}
Wan, Y.; and Zhang, L. 2020.
\newblock Approximate Multiplication of Sparse Matrices with Limited Space.
\newblock \emph{arXiv preprint:2009.03527} .

\bibitem[{Wegelin(2000)}]{wegelin2000survey}
Wegelin, J.~A. 2000.
\newblock A survey of Partial Least Squares ({PLS}) methods, with emphasis on
  the two-block case.
\newblock \emph{University of Washington, Technical Report} .

\bibitem[{Woodruff(2014{\natexlab{a}})}]{woodruff2014low}
Woodruff, D.~P. 2014{\natexlab{a}}.
\newblock Low rank approximation lower bounds in row-update streams.
\newblock In \emph{NIPS}.

\bibitem[{Woodruff(2014{\natexlab{b}})}]{woodruff14sketching}
Woodruff, D.~P. 2014{\natexlab{b}}.
\newblock Sketching as a Tool for Numerical Linear Algebra.
\newblock \emph{Foundations and Trends in Theoretical Computer Science}
  10(1-2): 1--157.

\bibitem[{Ye, Luo, and Zhang(2016)}]{ye2016frequent}
Ye, Q.; Luo, L.; and Zhang, Z. 2016.
\newblock Frequent direction algorithms for approximate matrix multiplication
  with applications in {CCA}.
\newblock In \emph{IJCAI}.

\end{thebibliography}

\clearpage\newpage\appendix\onecolumn

In this supplementary materials, Section~\ref{appdendix:proof-sigma}-\ref{appdendix:proof-balance} provide detailed proofs of lemmas we used in main text. Section~\ref{appendix:experiment} gives more details of experiments. We also provide additional discussion on algorithm SFD-AMM in Section~\ref{appendix:SFD-AMM}. 

\section{The Proof of Lemma \ref{lem:bound:sigma}}\label{appdendix:proof-sigma}
This lemma can be proved by the analysis of \citeauthor{mroueh2016co}'s~(\citeyear{mroueh2016co})~Theorem 2. We reformulate the details by our notations here for completeness.

\begin{proof}
Let $\hA^\ut=(\mSigma^\ut)^{1/2}\U^{\ut\top}\Q_x^{\ut\top}$ and $\hB^\ut=(\mSigma^\ut)^{1/2}\V^{\ut\top}\Q_y^{\ut\top}$.
The first inequality can be proved as follows:
\begin{align*}
  & \norm{\X^\top\Y-\A^\top\B} \\
= & \norm{\sum_{t=1}^n\vx^\ut(\vy^\ut)^\top-\sum_{t=1}^n\left(\big(\A^{(t)}\big)^\top\B^{(t)}-\big(\A^{(t-1)}\big)^\top\B^{(t-1)}\right)} \\
= & \norm{\sum_{t=1}^n\vx^\ut(\vy^\ut)^\top-\sum_{t=1}^n\left(\big(\A^{(t)}\big)^\top\B^{(t)}-\big(\A^{(t-1)}\big)^\top\B^{(t-1)}\right)} \\
= & \norm{\sum_{t=1}^n\left(\big(\hA^{(t)}\big)^\top\hB^{(t)}-\big(\A^{(t)}\big)^\top\B^{(t)}\right)} \\
\leq & \sum_{t=1}^n\norm{\big(\hA^{(t)}\big)^\top\hB^{(t)}-\big(\A^{(t)}\big)^\top\B^{(t)}} \\
\leq & \sum_{t=1}^n\norm{\Q_{x}^\ut\U^\ut\left({\bf\Sigma}^\ut-\delta^{\ut}\I_{2m}-{\bf\Sigma}^\ut\right)(\V^\ut)^\top(\Q_{y}^\ut)^\top}  \\
= & \sum_{t=1}^n \delta^{\ut},
\end{align*}
where we use triangle inequality and the definition of the notations.

Then we show the second inequality. Similar to above analysis, we have
\begin{align*}
  & \big\|\A^\top\B\big\|_* \\
= &  \sum_{t=1}^n \left(\big\|(\A^\ut)^\top\B^\ut\big\|_* - \big\|(\A^\utm)^\top\B^\utm\big\|_*\right) \\
= & \sum_{t=1}^n \left(\big\|(\hA^\ut)^\top\hB^\ut\big\|_* - \big\|(\A^\utm)^\top\B^\utm\big\|_*\right) 
- \sum_{t=1}^n \left(\big\|(\hA^\ut)^\top\hB^\ut\big\|_* - \big\|(\A^\ut)^\top\B^\ut\big\|_*\right).
\end{align*}
The QR steps means $(\hA^\ut)^\top\hB^\ut$ and $(\A^\ut)^\top\B^\ut$ can be written as
\begin{align*}
(\hA^\ut)^\top\hB^\ut = \Q_x^\ut\U^\ut{\bf\Sigma}^\ut(\V^\ut)^\top\Q_y^\ut 
\text{~~and~~}
(\A^\ut)^\top\B^\ut = \Q_x^\ut\U^\ut({\bf\Sigma}-\sigma_m^\ut\I_m)^\ut(\V^\ut)^\top\Q_y^\ut.
\end{align*}
which implies
\begin{align*}
    \big\|(\hA^\ut)^\top\hB^\ut\big\|_* - \big\|(\A^\ut)^\top\B^\ut\big\|_* = m\sigma_m^\ut.
\end{align*}
Using triangle inequality, we have
\begin{align*}
  & \big\|(\hA^\ut)^\top\hB^\ut\big\|_* - \big\|(\A^\utm)^\top\B^\utm\big\|_* \\
\leq & \big\|(\hA^\ut)^\top\hB^\ut - (\A^\utm)^\top\B^\utm\big\|_* \\
= & \big\|\vx^\ut(\vy^\ut)^\top\big\|_* 
= \big\|\vx^\ut\big\|_2\big\|\vy^\ut\big\|_2
\end{align*}
Combing all above results, we have
\begin{align*}
   \Norm{\A^\top\B}_* 
\leq   \sum_{t=1}^n \big\|\vx^\ut\big\|_2\big\|\vy^\ut\big\|_2
- m\sum_{t=1}^n \sigma_m^\ut  
\leq   \Norm{\X}_F\Norm{\Y}_F - m\sum_{t=1}^n \sigma_m^\ut.
\end{align*}
\end{proof}

\section{The Proof of Lemma \ref{lem:bound:nuclear}}\label{appdendix:proof-nuclear}
This lemma is crucial to establish the tighter bound of COD. We first introduce the following property of Ky Fan $k$-norm.
\begin{lem}[{\citealt[Theorem 3.4.1]{horn1994topics}}]\label{lem:max-Fan-k}
Given matrix $\M$, we have
\begin{align*}
    \Norm{\M}_k=\max\left\{\left|\tr\left(\P^\top\M\Q\right)\right|: \P^\top\P=\I_k, \Q^\top\Q=\I_k\right\},
\end{align*}
where $\tr(\cdot)$ is the trace of the matrix.
\end{lem}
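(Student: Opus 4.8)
The plan is to prove the claimed identity by bounding the right-hand maximum from below and from above, showing both bounds equal $\Norm{\M}_k=\sum_{i=1}^k\sigma_i(\M)$. Fix a condensed SVD $\M=\U\mSigma\V^\top$ with $\mSigma=\diag(\sigma_1(\M),\dots,\sigma_r(\M))$, and write $\vu_i,\vv_i$ for the columns of $\U,\V$. For the lower bound I would simply exhibit an optimal feasible pair: let $\P=[\vu_1,\dots,\vu_k]$ and $\Q=[\vv_1,\dots,\vv_k]$, which satisfy $\P^\top\P=\Q^\top\Q=\I_k$. By orthonormality of the columns of $\U$ and $\V$, the matrices $\P^\top\U$ and $\V^\top\Q$ pick out the first $k$ coordinates, so $\P^\top\M\Q=\diag(\sigma_1(\M),\dots,\sigma_k(\M))$ and thus $\tr(\P^\top\M\Q)=\sum_{i=1}^k\sigma_i(\M)=\Norm{\M}_k$. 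Hence the maximum is at least $\Norm{\M}_k$.

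For the upper bound, take any feasible $\P,\Q$ and rewrite the objective using the cyclic property of the trace as $\tr(\P^\top\M\Q)=\tr(\M\W^\top)$, where $\W=\P\Q^\top$. The decisive observation is the singular-value profile of $\W$: since $\P^\top\P=\I_k$ we get $\W^\top\W=\Q\P^\top\P\Q^\top=\Q\Q^\top$, an orthogonal projection of rank $k$, so $\W$ has exactly $k$ singular values equal to $1$ and all remaining ones equal to $0$. Von Neumann's trace inequality then yields $|\tr(\M\W^\top)|\le\sum_i\sigma_i(\M)\,\sigma_i(\W)=\sum_{i=1}^k\sigma_i(\M)=\Norm{\M}_k$, which matches the lower bound and finishes the proof.

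The main obstacle is the upper bound, since it rests on von Neumann's trace inequality. If a self-contained argument is preferred, I would replace that step as follows. Set $c_j=\vu_j^\top\W\vv_j$; expanding $\M=\sum_j\sigma_j(\M)\vu_j\vv_j^\top$ gives $\tr(\M\W^\top)=\sum_j\sigma_j(\M)c_j$. For each $m$, the scalars $c_1,\dots,c_m$ are the diagonal entries of the compression $\U_m^\top\W\V_m$ (using the first $m$ columns of $\U,\V$), a matrix of rank at most $\min(m,k)$ and spectral norm at most $\norm{\W}\le 1$; since the sum of the absolute diagonal entries of a matrix never exceeds its nuclear norm and $\Norm{\cdot}_*\le\mathrm{rank}(\cdot)\,\norm{\cdot}$, we obtain $\sum_{j=1}^m|c_j|\le\min(m,k)$. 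Feeding these partial-sum estimates into a summation-by-parts against the nonincreasing weights $\sigma_j(\M)$ gives $\sum_j\sigma_j(\M)|c_j|\le\sum_{j=1}^k\sigma_j(\M)$, the desired bound. I expect the partial-sum estimate $\sum_{j=1}^m|c_j|\le\min(m,k)$ together with the Abel summation to be the only genuinely technical part.
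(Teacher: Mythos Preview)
The paper does not prove this lemma; it is quoted verbatim as Theorem~3.4.1 of Horn and Johnson's \emph{Topics in Matrix Analysis} and used as a black box in the proof of Lemma~\ref{lem:bound:nuclear}. Your argument is correct and is in fact the standard textbook route: exhibit the leading singular vectors for the lower bound, and for the upper bound observe that $\W=\P\Q^\top$ has exactly $k$ unit singular values so that von Neumann's trace inequality gives $|\tr(\M\W^\top)|\le\sum_{i=1}^k\sigma_i(\M)$. The self-contained alternative via the partial-sum bound $\sum_{j\le m}|c_j|\le\min(m,k)$ and Abel summation also goes through; the inequality ``sum of absolute diagonal entries $\le$ nuclear norm'' that you invoke has the one-line proof $\sum_i|A_{ii}|\le\sum_i\sum_\ell\sigma_\ell|(\vp_\ell)_i||(\vq_\ell)_i|\le\sum_\ell\sigma_\ell$ from the SVD of $A$, so the argument is genuinely free of von Neumann. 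The only cosmetic gap is the case $\mathrm{rank}(\M)<k$, where you should pad $\P,\Q$ with orthonormal columns from the orthogonal complements of the column spaces of $\U,\V$; this keeps $\P^\top\P=\Q^\top\Q=\I_k$ and does not change the trace.
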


Then we prove Lemma \ref{lem:bound:nuclear} by using Lemma \ref{lem:max-Fan-k}.
\begin{proof}
We let $(\hP,\hQ)=\argmax\left\{\left|\tr\left(\P^\top\A\Q\right)\right|: \P^\top\P=\I, \Q^\top\Q=\I\right\}$.
Then we have
{\begin{align*}
  & \Norm{\X^\top\Y}_*-\Norm{\A^\top\B}_* \\
= & \sum_{i=1}^k \sigma_i(\X^\top\Y) + \sum_{i=k+1}^d \sigma_i(\X^\top\Y) -\Norm{\A^\top\B}_* \\   
= & \max\left\{\left|\tr\left(\P^\top(\X^\top\Y)\Q\right)\right| : \P^\top\P=\I_k, \Q^\top\Q=\I_k\right\} + \sum_{i=k+1}^d \sigma_i(\X^\top\Y) \\ 
& - \max\left\{\left|\tr\left(\P^\top(\A^\top\B)\Q\right)\right| : \P^\top\P=\I_k, \Q^\top\Q=\I_k\right\} - \sum_{i=k+1}^d \sigma_i(\A^\top\B)  \\ 
\leq &  \sum_{i=k+1}^d \sigma_i(\X^\top\Y) + \left|\tr\left(\hP^\top\left(\X^\top\Y\right)\hQ\right)\right|  - \left|\tr\left(\hP^\top\left(\A^\top\B\right)\hQ\right)\right|  \\
\leq & \sum_{i=k+1}^d \sigma_i(\X^\top\Y) + \left|\tr\left(\hP^\top\left(\X^\top\Y-\A^\top\B\right)\hQ\right)\right| \\
\leq & \sum_{i=k+1}^d \sigma_i(\X^\top\Y) +  \max\left\{\left|\tr\left(\P^\top\left(\X^\top\Y-\A^\top\B\right)\Q\right)\right|:\P^\top\P=\I_k, \Q^\top\Q=\I_k\right\} \\
\leq & \sum_{i=k+1}^d \sigma_i(\X^\top\Y) +  \max\left\{\left|\tr\left(\P^\top\left(\sum_{t=1}^n\sigma_m^\ut\Q_{x}^\ut\U^\ut\V^{\ut\top}\Q_y^{\ut\top}\right)\Q\right)\right|:\P^\top\P=\I_k, \Q^\top\Q=\I_k\right\}
\\
\leq & \sum_{i=k+1}^d \sigma_i(\X^\top\Y) +  \sum_{t=1}^n\delta^\ut\max\left\{\left|\tr\left(\P^\top\left(\Q_{x}^\ut\U^\ut\V^{\ut\top}\Q_y^{\ut\top}\right)\Q\right)\right|:\P^\top\P=\I_k, \Q^\top\Q=\I_k\right\} \\
= & \sum_{i=k+1}^d \sigma_i(\X^\top\Y) +  \sum_{t=1}^n\delta^\ut\sum_{i=1}^k\sigma_i\left(\Q_{x}^\ut\U^\ut\V^{\ut\top}\Q_y^{\ut\top}\right) \\
= & \sum_{i=k+1}^d \sigma_i(\X^\top\Y) +  k\sum_{t=1}^n\delta^\ut,
\end{align*}}
where the first inequality is due to the definition of $\hP$ and $\hQ$; the second and the third one use triangle inequality; the last two inequality is based on the procedure of the algorithm; all equalities come from Lemma~\ref{lem:max-Fan-k} and the procedure of COD.
\end{proof}

\section{The Proof of Lemma \ref{lem:randSVD}}\label{appdendix:proof-spm}

We can prove Lemma \ref{lem:randSVD} by modifying the analysis in 
Section 4.3 of~\citeauthor{woodruff14sketching}'s~(\citeyear{woodruff14sketching}) survey. We present the details for completeness. The proof is based on the following lemma.

\begin{lem}[{\citealt[Proposition 2.4 and (3.2)]{rudelson2010non}}]\label{lem:largest-sv}
Let $\mOmega\in\BR^{d_1\times d_2}$ be a random matrix whose entries are independent mean zero sub-gaussian random variables whose subgaussian moments are bounded by 1. Then we have
\begin{enumerate}
    \item $\BP\big(\norm{\mOmega}>C(\sqrt{d_1}+\sqrt{d_2}+t)\big)\leq 2\exp(-ct^2)$ for any $t>0$;
    \item $\BP(\sigma_{\min}(\mOmega)\leq\zeta d^{-1/2})\leq\zeta$ when $d_1=d_2=d$ for any $\zeta>0$;
\end{enumerate}
where $c>0$ and $C>0$ are some constants.
\end{lem}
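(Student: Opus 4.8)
The plan is to establish the two claims by the standard tools of non-asymptotic random matrix theory, treating them separately since they require genuinely different techniques: a covering-number (epsilon-net) union bound for the upper tail of the largest singular value, and the compressible/incompressible dichotomy of Rudelson--Vershynin for the lower tail of the smallest singular value.

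For the first claim I would start from the variational formula $\norm{\mOmega}=\sup_{x\in\BS^{d_2-1},\,y\in\BS^{d_1-1}}\inner{\mOmega x}{y}$. For a fixed pair of unit vectors, $\inner{\mOmega x}{y}=\sum_{i,j}y_i\Omega_{ij}x_j$ is a linear combination of independent mean-zero sub-gaussian entries with coefficients $y_ix_j$; by the rotation invariance of the sub-gaussian class its sub-gaussian parameter is bounded by a constant multiple of $\big(\sum_{i,j}y_i^2x_j^2\big)^{1/2}=\norm{x}\norm{y}=1$, so $\BP(\inner{\mOmega x}{y}>s)\le 2\exp(-c_0s^2)$. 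I would then fix a $1/4$-net $\mathcal{N}_x$ of $\BS^{d_2-1}$ and a $1/4$-net $\mathcal{N}_y$ of $\BS^{d_1-1}$, whose cardinalities obey the volumetric bounds $\lvert\mathcal{N}_x\rvert\le 9^{d_2}$ and $\lvert\mathcal{N}_y\rvert\le 9^{d_1}$, and invoke the approximation lemma $\norm{\mOmega}\le 2\max_{x\in\mathcal{N}_x,\,y\in\mathcal{N}_y}\inner{\mOmega x}{y}$ valid for such nets. A union bound over the net gives $\BP\big(\max_{\mathcal{N}_x\times\mathcal{N}_y}\inner{\mOmega x}{y}>s\big)\le 9^{d_1+d_2}\cdot 2\exp(-c_0s^2)$; choosing $s=C(\sqrt{d_1}+\sqrt{d_2}+t)$ with $C$ large enough that $c_0s^2$ dominates $(d_1+d_2)\log 9$ and leaves a residual $-ct^2$ yields exactly $\BP\big(\norm{\mOmega}>C'(\sqrt{d_1}+\sqrt{d_2}+t)\big)\le 2\exp(-ct^2)$ after renaming the constants.

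For the second claim (the square case $d_1=d_2=d$) I would use $\sigma_{\min}(\mOmega)=\inf_{x\in\BS^{d-1}}\norm{\mOmega x}$ and split the sphere into compressible vectors $\mathrm{Comp}$ (those within a small fixed distance of a sparse unit vector) and incompressible vectors. On the compressible set, which has low metric entropy, a net argument combined with the operator-norm bound from the first claim shows $\inf_{\mathrm{Comp}}\norm{\mOmega x}\ge c\sqrt{d}$ outside an event of exponentially small probability. On the incompressible set I would invoke the invertibility-via-distance lemma: averaging over the columns, $\BP\big(\inf_{x\ \mathrm{incompr}}\norm{\mOmega x}\le \eps d^{-1/2}\big)$ is controlled by $\tfrac{1}{d}\sum_k\BP\big(\mathrm{dist}(\vu_k,H_k)\le c\eps\big)$, where $\vu_k$ is the $k$-th column of $\mOmega$ and $H_k$ the span of the remaining columns. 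Since a unit normal $\vv$ of $H_k$ is independent of $\vu_k$, writing $\mathrm{dist}(\vu_k,H_k)=\lvert\inner{\vu_k}{\vv}\rvert=\big\lvert\sum_j\Omega_{jk}v_j\big\rvert$, a small-ball (anti-concentration) estimate for this sub-gaussian linear form gives $\BP\big(\mathrm{dist}(\vu_k,H_k)\le\eps\big)\le C\eps$; substituting and combining with the exponentially small compressible-part failure yields the clean dimension-free bound $\BP\big(\sigma_{\min}(\mOmega)\le\zeta d^{-1/2}\big)\le\zeta$.

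The first claim is routine; the real obstacle is the second. The infimum over the sphere cannot be handled by a naive net-plus-union-bound as in the first part, because a lower bound failing at a net point says nothing favorable about nearby directions, which is exactly why the compressible/incompressible decomposition is forced on us. The crux is the anti-concentration step: one must quantify how spread out the sub-gaussian entries are (via a L\'evy concentration-function or Berry--Esseen type estimate) so that the random form $\inner{\vu_k}{\vv}$ carries no mass concentrated near zero, and then correctly balance the exponentially small compressible-part failure against the linear-in-$\zeta$ incompressible-part contribution to land on the stated form. Verifying that the moment normalization assumed in the hypothesis supplies the non-degeneracy required by the small-ball estimate is where the care is concentrated.
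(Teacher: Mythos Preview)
The paper does not prove this lemma; it is quoted from \citet{rudelson2010non} as a black-box tool in the proof of Lemma~\ref{lem:randSVD}, so there is no argument in the paper to compare yours against. Your outline does track the proof in the cited source: part~1 via an $\eps$-net and a union bound is exactly their Proposition~2.4, and part~2 via the compressible/incompressible decomposition together with the invertibility-via-distance lemma is their method for the smallest singular value of a square matrix.

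There is one genuine gap in your part~2. The small-ball estimate $\BP(\lvert\inner{\vu_k}{\vv}\rvert\le\eps)\le C\eps$ does not follow from the hypothesis ``sub-gaussian moments bounded by~1'' alone---that is purely an upper-tail condition and is satisfied, for instance, by the identically zero random variable. Rudelson--Vershynin assume in addition that the entries have unit variance, which is what supplies the non-degeneracy needed for anti-concentration. You flag this concern in your last sentence, but it is a missing hypothesis in the lemma as stated rather than a technicality to be ``verified''; in the paper's actual application the entries are i.i.d.\ $\fN(0,1)$, so the issue is harmless there. Relatedly, the Rudelson--Vershynin bound reads $\BP(\sigma_{\min}\le\eps d^{-1/2})\le C\eps + c^{d}$ with constants $C>0$ and $c\in(0,1)$ depending on the sub-gaussian norm; the clean form $\le\zeta$ stated in the lemma has already absorbed both the leading constant and the additive exponential. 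Your sketch produces these two pieces separately but then asserts they ``combine'' to yield exactly $\zeta$, which is not automatic and in general not true with constant~$1$.
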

Then we provide the proof of Lemma \ref{lem:randSVD}.
\begin{proof}
Let $\N=(\M\M^\top)^q\M$.
By \citeauthor{woodruff14sketching}'s (\citeyear{woodruff14sketching}) Lemma 4.14, $\Z\Z^\top\M$ is the best rank-$m$ approximation of $\M$
in the column space of $\Z$ with respect to the spectral norm. 
Hence, we have
\begin{align*}
    \norm{\M-\Z\Z^\top\M} 
\leq \norm{\M-(\Z\Z^\top \N)(\Z\Z^\top\N)^\dagger\M}
\leq \norm{\left(\I_{d_1}-(\Z\Z^\top \N)(\Z\Z^\top\N)^\dagger\right)\M},
\end{align*}
where the notation $(\cdot)^\dagger$ presents pseudo-inverse; the inequality follows $\Z\Z^\top\N$ is of rank-$m$ and in the column space of $\Z$.

Since $\I_{d_1}-(\Z\Z^\top\N)(\Z\Z^\top\N)^\dagger$
is a projection matrix, we can apply \citeauthor{woodruff14sketching}'s (\citeyear{woodruff14sketching}) Lemma 4.15 to infer that 
\begin{align*}
& \norm{(\I_{d_1}-(\Z\Z^\top\N)(\Z\Z^\top\N)^\dagger)\M} \\
\leq & \norm{(\I_{d_1}-(\Z\Z^\top\N)(\Z\Z^\top\N)^\dagger)(\M\M^\top)^q\M}^{1/(2q+1)} \\
= & \norm{\N-(\Z\Z^\top\N)(\Z\Z^\top\N)^\dagger\N}^{1/(2q+1)} \\
= & \norm{\N-\Z\Z^\top\N}^{1/(2q+1)}
\end{align*}
where we use that $(\Z\Z^\top\N)^\dagger = (\Z^\top\N)^\dagger\Z^\top$ since $\Z$ has orthonormal columns, and thus
\begin{align*}
    (\Z\Z^\top\N)(\Z\Z^\top\N)^\dagger\N 
= (\Z\Z^\top\N)(\Z^\top\N)^\dagger(\Z^\top\N) = \Z\Z^\top\N.
\end{align*}
Hence, we have
\begin{align}\label{bound:MNZq}
    \norm{\M - \Z\Z^T \M} \leq \norm{\N - \Z\Z^\top\N}^{1/(2q+1)}.
\end{align}
Let $\U\mSigma\V^\top$ be the SVD of $\N$, $\mOmega_U=\V_m^\top\G\in\BR^{m\times m}$ and $\mOmega_L=\V^\top_{d_1-m}\G\in\BR^{(d_1-m)\times m}$, where $\V^\top_m$ denotes the top $m$ rows of $\V^\top$ and $\V^\top_{d_1-m}$ the remaining rows. 
Since $\V^\top$ are column orthonormal, by rotational invariance of the Gaussian distribution, both $\mOmega_U$ and $\mOmega_L$ are independent matrices of i.i.d. $\fN(0, 1)$ entries.

We now apply \citeauthor{woodruff14sketching}'s (\citeyear{woodruff14sketching}) Lemma 4.4 with the $\C$ of that lemma equal to $\Z$ above, the $\Z$ of that lemma equal to $\V_m$, and the $\A$ of that lemma equal to $\N$ above. This implies the $\E$ of that lemma is equal to $\N-\N_m$. 
Note that to apply the lemma we need $\V^T_m\G$ to have full rank, which holds with probability 1 since it is a $m\times m$ matrix of i.i.d. $\fN(0, 1)$ random variables. We thus have
\begin{align}
\begin{split}\label{bound:MNZ}
  &  \norm{\N-\Z\Z^\top\N}^2 \\
= &  \norm{\N-\N_m}^2 + \norm{(\N-\N_m)\G(\V_m^\top\G)^\dagger}^2 \\
= &  \norm{\N-\N_m}^2 + \norm{\U_{d_1-m}\mSigma_{d_1-m}\V_{d_1-m}^\top\G(\V_m^\top\G)^\dagger}^2 \\
= &  \norm{\N-\N_m}^2 + \norm{\mSigma_{d_1-m}\V_{d_1-m}^\top\G(\V_m^\top\G)^\dagger}^2 \\
\leq & \norm{\N-\N_m}^2\left(1+\norm{\mOmega_L}^2\big\|\mOmega_U^\dagger\big\|^2\right)
\end{split}
\end{align}
where $\mSigma_{d_1-m}$ denotes the $(d_1-m)\times(d_1-m)$ diagonal matrix whose entries are the bottom $d_1-m$ diagonal entries of $\mSigma$, and $\U_{d_1-m}$ denotes the rightmost $d_1-m$ columns of $\U$. Here in the second equality we use unitary invariance of $\U_{d_1-m}$, while in the inequality we use sub-multiplicativity of the spectral norm. 

By using Lemma~\ref{lem:largest-sv} with 
$\mOmega=\mOmega_L$ and $t=\sqrt{c^{-1}\log(4/p)}$, we have
\begin{align}\label{bound:largest-sv1}
    \BP\left(\norm{\mOmega_L}^2 
\leq \left(\sqrt{d_1-m}+\sqrt{m} +\sqrt{c^{-1}\log\left(4/p\right)}\right)^2\right) \geq 1-\frac{p}{2}.
\end{align}
By using Lemma~\ref{lem:largest-sv} with $\mOmega=\mOmega_U$ and $\zeta=p/2$, we have
\begin{align}\label{bound:largest-sv2}
    \BP\left(\sigma_{\min}^2(\mOmega_U) \geq \frac{p^2}{4m}\right) 
\geq 1-\frac{p}{2}.
\end{align}
Since $\mOmega_L$ and $\mOmega_U$ are independent, combing inequalities (\ref{bound:largest-sv1}) and (\ref{bound:largest-sv2}), we have
\begin{align}\label{bound:sv3}
    1+\norm{\mOmega_L}^2\|\mOmega_U^\dagger\|^2
\leq & 1+\left(\sqrt{d_1-m}+\sqrt{m} +\sqrt{c^{-1}\log\left(4/p\right)}\right)^2\cdot\frac{4m}{p^2} 
\leq \frac{c_0(d_1+\log(1/p))m}{p^2}
\end{align}
for some constant $c_0>0$ with probability at least $(1-p/2)^2>1-p$.

Combining results of (\ref{bound:MNZq}), (\ref{bound:MNZ}) and (\ref{bound:sv3}), we have
\begin{align*}
    \norm{\M - \Z\Z^T \M} 
\leq \norm{\N - \N_m}^{1/(2q+1)}\cdot\left(\frac{c_0(d_1+\log(1/p))m}{p^2}\right)^{1/(4q+2)}.
\end{align*}
Noting that $\norm{\N - \N_m}=\norm{\M - \M_m}^{2q+1}$ and setting
\begin{align*}
q =\frac{1}{4}\left(\frac{1}{\eps}\log\left(\frac{c_0(d_1+\log(1/p))m}{p^2}\right)-2\right) 
= \tilde\Theta\left(\frac{1}{\eps}\log\left(\frac{md_1}{p}\right)\right)
\end{align*}
we have
\begin{align*}
    \norm{\M - \Z\Z^T \M} 
\leq (1+\eps)\norm{\M-\M_m} = (1+\eps)\sigma_{m+1}(\M)
\end{align*}
with probability at least $1-p$.
\end{proof}

\section{The Proof of Lemma~\ref{lem:fro-shrink}} \label{appdendix:proof-balance}

\begin{proof}
The procedure of Algorithm~\ref{alg:SCOD} means $\sigma^2_i(\tX^\ui)=\sigma^2_i(\tY^\ui)=\sigma_i\big(\tX^{\ui\top}\tY^\ui\big)$. Consider that the output $\Z^\ui$ of SPM (Algorithm~\ref{alg:SPM}) is column orthonormal, then we have
\begin{align*}
  & \big\|\tX^\ui\big\|^2_F\big\|\tY^\ui\big\|_F^2  \\
= & \big\|\tX^{\ui\top}\tY^\ui\big\|_F^2 \\
= & \big\|\Z^\ui\Z^{\ui\top}\X'^{\ui\top}\Y'^\ui\big\|_F^2 \\
= & \tr\left(\Y'^{\ui\top}\X'^\ui\Z^\ui\Z^{\ui\top}\Z^\ui\Z^{\ui\top}\X'^{\ui\top}\Y'^\ui\right) \\
= & \tr\left(\Y'^{\ui\top}\X'^\ui\Z^\ui\Z^{\ui\top}\X'^{\ui\top}\Y'^\ui\right) \\
\leq & \tr\left(\Y'^{\ui\top}\X'^\ui\X'^{\ui\top}\Y'^\ui\right) \\
= & \big\|\X'^{\ui\top}\Y'^\ui\big\|_F^2 \\
\leq & \big\|\X'^\ui\big\|_F^2\big\|\Y'^\ui\big\|_F^2.
\end{align*}
\end{proof}

\section{More Details of Numerical Experiments}\label{appendix:experiment}

Our experiments are conducted on a desktop computer with Intel(R) Core(TM) i5-4570 CPU and 24GB memory. We use MATLAB 2019a to run the experiments and the operating system is Windows 10.\footnote{The code is publicly available at: \url{http://luoluo.people.ust.hk/code/SCOD.zip}} 
In the implementation of subspace power method (Algorithm~\ref{alg:SPM}), powering $\M\M^\top$ makes $(\M\M^\top)^q\M\G$ could be ill-conditioned. We include an additionally orthonormalization step after each round of multiplications to improve the stability~\cite{martinsson2010normalized,musco2015randomized}. This operation does not change the column span, so it gives an equivalent algorithm in exact arithmetic, but improves empirical performance significantly. Since $q$ is typical a small constant in practice, the additionally cost of orthonormalization is limited. 

We use cross-language datasets as we mentioned in Section~\ref{section:experiments}.
Each of dataset has alignment information of two languages at sentence-level and there are $n$ sentences in total.
We let $t$-th row of $\X$ be the bag-of-words feature of $t$-th sentence with respect to one language and $t$-th row of $\Y$ be the bag-of-words feature of the same sentence respect to the other language.

We present sketch-time comparison in Figure~\ref{figure:sketch-time}. The algorithms SFD-AMM and SCOD are much more faster than FD-AMM and COD, since FD-AMM and COD ignore the sparse structure of the input matrices. The running time of SFD-AMM and SCOD are comparable which satisfies our complexity analysis. 

\begin{figure*}[!ht]
\centering
\begin{tabular}{cccc}
     \includegraphics[scale=0.29]{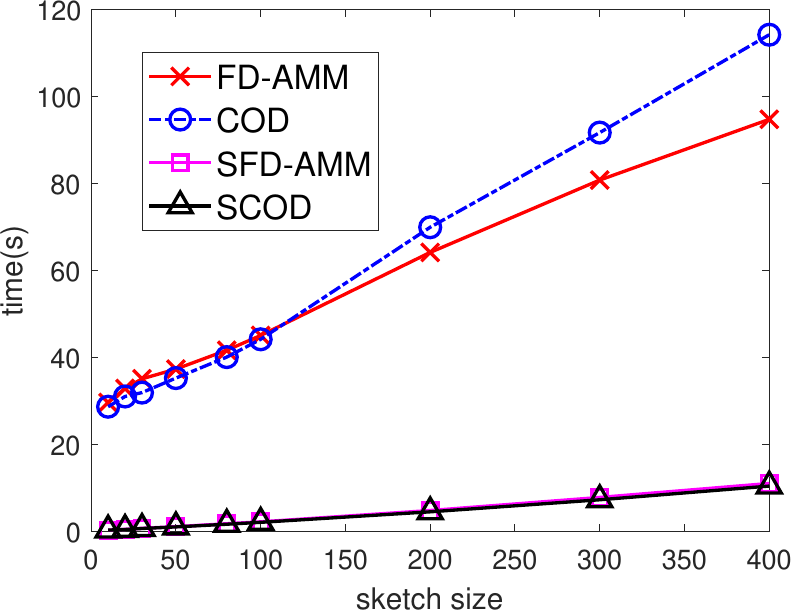} &
     \includegraphics[scale=0.29]{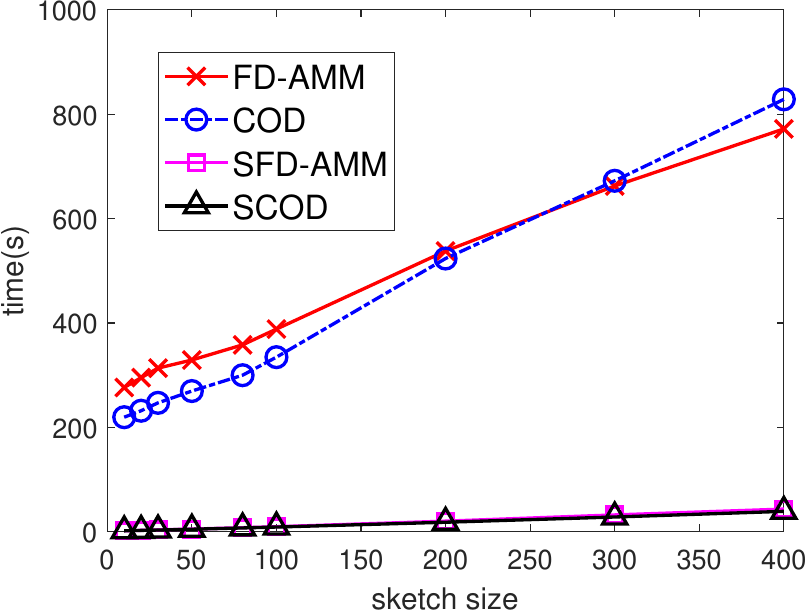} &
     \includegraphics[scale=0.29]{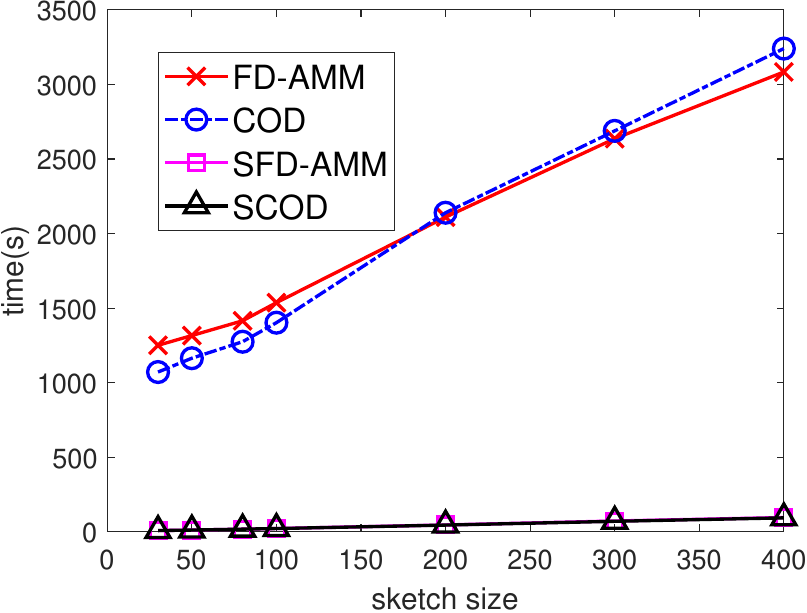} &
     \includegraphics[scale=0.29]{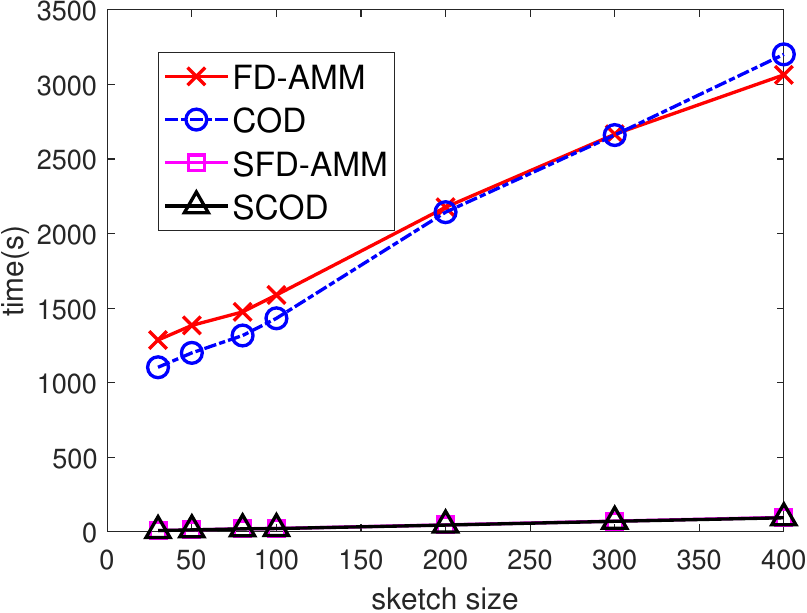} \\
     \small (a) APR (EN-FR) & 
     \small (b) PAN (EN-FR) & 
     \small (d) JRC (EN-FR) &
     \small (e) JRC (FR-ES) \\[0.1cm]
     \includegraphics[scale=0.29]{figure/jrc0-b.pdf} &
     \includegraphics[scale=0.29]{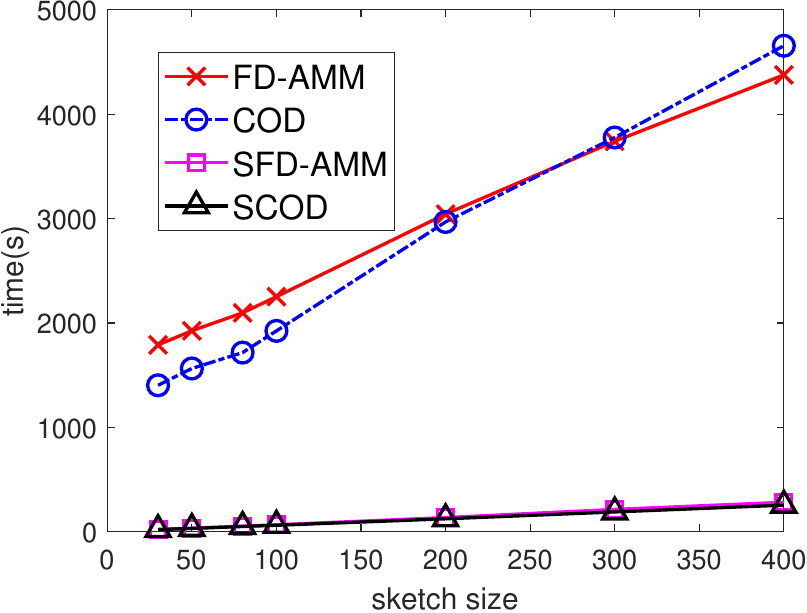} &
     \includegraphics[scale=0.29]{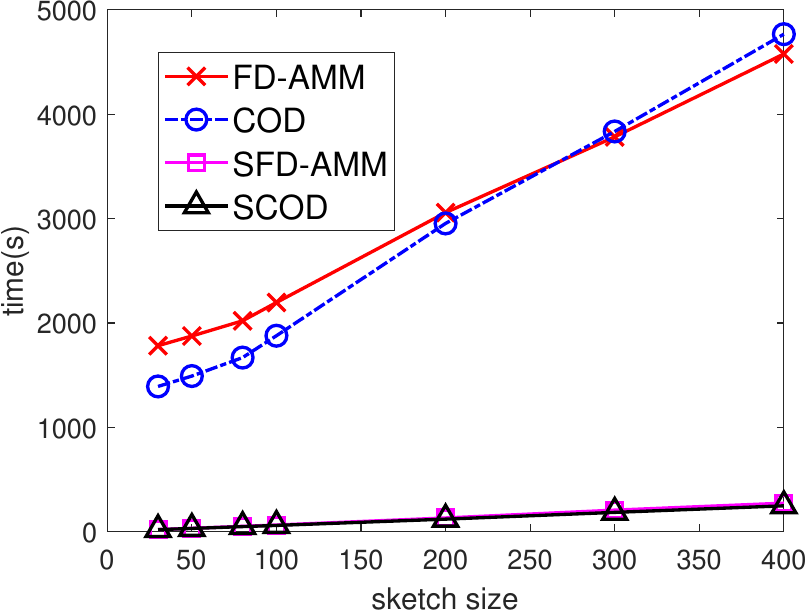} &
     \includegraphics[scale=0.29]{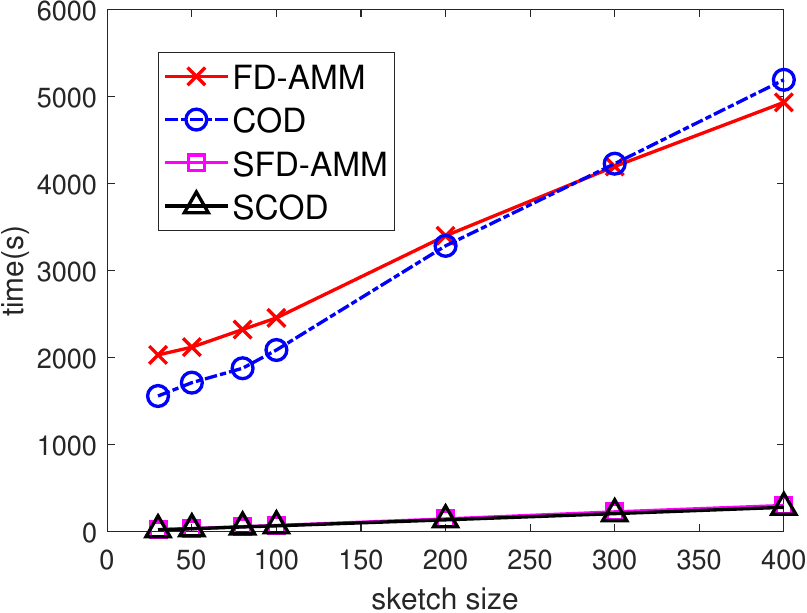} \\
     \small (f) JRC (FR-ES) & 
     \small (g) EURO (EN-FR) & 
     \small (j) EURO (EN-ES) &
     \small (i) EURO (FR-ES)
\end{tabular}
\caption{The plot of sketch size against time (s)}\label{figure:sketch-time}
\end{figure*}

\section{Additional Discussion on SFD-AMM}\label{appendix:SFD-AMM}

\citet{ghashami2016frequent} proposed a variant of FD for sketching sparse matrices called sparse frequent directions (SFD). 
Given input matrix $\Z\in\BR^{n\times d}$, the algorithm output $\C\in\BR^{m\times d}$ such that
\begin{align*}
    \norm{\Z^\top\Z-\C^\top\C} \leq \frac{1}{\alpha m-k}\left(\Norm{\Z}_F^2-\Norm{\Z_k}_F^2\right)
\end{align*}
with high probability for any $k<\alpha m$, where $\alpha$ is a constant depends on the accuracy of SPM. SFD requires $\tilde\fO\big(m\cdot\nnz(\Z) + m^2n\big)$ time complexity and $\fO(md)$ space. The procedure of SFD is similar to SCOD in the case of $\X=\Y$, but includes additional shrinking operation on the output of SPM~\cite{ghashami2016efficient} to apply the ``mergeability porperty'' of FD~\cite{ghashami2016frequent,desai2016improved} in their analysis.

For streaming AMM with sparse input, it is natural to combine the idea of SFD with FD-AMM directly which leads to the algorithm sparse FD-AMM (SFD-AMM). 
Similar to FD-AMM, SFD-AMM applies SFD on concatenated matrix $\Z=[\X, \Y]\in\BR^{n\times(d_1+d_2)}$ and its output $\C$ which can be written as $\C=[\A,\B]$, where $\A\in\BR^{n\times d_x}$ and $\B\in\BR^{n\times d_y}$.  
Then we use $\A^\top\B$ to approximate $\X^\top\Y$ that satisfies
\begin{align}\label{bound:SFD-AMM}
    \norm{\X^\top\Y-\A^\top\B} 
\leq \norm{\Z^\top\Z-\C^\top\C} 
\leq \frac{1}{\alpha m-k}\left(\Norm{\Z}_F^2-\Norm{\Z_k}_F^2\right).
\end{align}
The time complexity of SFD-AMM has the same order as SCOD since $\nnz(\Z)=\nnz(\X)+\nnz(\Y)$. It is not easy to compare the error bound (\ref{bound:SFD-AMM}) with SCOD (Theorem~\ref{thm:sparse-error}) in general because there does not exist simple relationship between the singular values of $\Z=[\X,\Y]$ and $\X^\top\Y$. However, SCOD always performs better than SFD-AMM empirically as we observed in Section~\ref{section:experiments}.

In theoretical, we can improve the time complexity of SFD-AMM to achieve the error bound of (\ref{bound:SFD-AMM}) by integrating random sampling~\cite{huang2019near}. 
However, the implementation of this strategy requires the value of $k$ is given. 
Unfortunately, it is difficult to select a suitable $k$ for streaming setting in general. In contrast, the value of $k$ in SFD-AMM or SCOD is only for theoretical analysis and it is no related to the implementation of algorithms.

\end{document}